\documentclass[letterpaper]{article} 
\usepackage[preprint]{aaai2027}  

\usepackage[hyphens]{url}  
\usepackage{graphicx} 
\usepackage{natbib}  
\usepackage{caption} 
\usepackage{xcolor}
\usepackage{tikz}
\usetikzlibrary{positioning, arrows.meta, calc, fit, backgrounds, shapes.geometric, decorations.pathreplacing}
\usepackage{multirow}

\usepackage{algorithm}
\usepackage{algorithmic}

\usepackage{amsmath}
\usepackage{amssymb}
\usepackage{mathtools}
\usepackage{amsthm}

\usepackage[capitalize,noabbrev]{cleveref}

\theoremstyle{plain}
\newtheorem{theorem}{Theorem}
\newtheorem{corollary}{Corollary}
\newtheorem{proposition}[theorem]{Proposition}
\newtheorem{lemma}[theorem]{Lemma}

\usepackage{framed}
\definecolor{shadecolor}{gray}{0.93}
\let\origtheorem\theorem     \let\origendtheorem\endtheorem
\renewenvironment{theorem}{\begin{snugshade}\origtheorem}{\origendtheorem\end{snugshade}}
\let\origproposition\proposition \let\origendproposition\endproposition
\renewenvironment{proposition}{\begin{snugshade}\origproposition}{\origendproposition\end{snugshade}}
\let\origlemma\lemma         \let\origendlemma\endlemma
\renewenvironment{lemma}{\begin{snugshade}\origlemma}{\origendlemma\end{snugshade}}
\let\origcorollary\corollary \let\origendcorollary\endcorollary
\renewenvironment{corollary}{\begin{snugshade}\origcorollary}{\origendcorollary\end{snugshade}}

\newtheorem{reprop}{Proposition}
\newtheorem{recor}{Corollary}
\let\origreprop\reprop \let\origendreprop\endreprop
\renewenvironment{reprop}{\begin{snugshade}\origreprop}{\origendreprop\end{snugshade}}
\let\origrecor\recor \let\origendrecor\endrecor
\renewenvironment{recor}{\begin{snugshade}\origrecor}{\origendrecor\end{snugshade}}

\DeclareMathOperator*{\argmax}{arg\,max}

\newcommand{\std}[1]{{\scriptsize$\pm$#1}}
\newcommand{\chimera}{ChiMERa-Bench}
\newcommand{\af}{\emph{AgForce}}

\usepackage{newfloat}
\usepackage{listings}
\DeclareCaptionStyle{ruled}{labelfont=normalfont,labelsep=colon,strut=off} 
\floatstyle{ruled}
\newfloat{listing}{tb}{lst}{}
\floatname{listing}{Listing}

\usepackage{booktabs}

\title{AgForce Enables Antigen-conditioned Generative Antibody Design}

\author{
    Mansoor Ahmed\textsuperscript{\rm 1,2}\corresponding,
    Murray Patterson\textsuperscript{\rm 1}\corresponding
}
\affiliations{
    \textsuperscript{\rm 1}Georgia State University, Atlanta, USA\\
    \textsuperscript{\rm 2}Georgia Institute of Technology, Atlanta, USA\\
    mahmed76@student.gsu.edu, mpatterson30@gsu.edu
}

\begin{document}
\maketitle

\begin{abstract}
Antibody design methods condition on antigen structure to generate complementarity-determining regions (CDR), yet a systematic evaluation of baseline methods reveals that they largely ignore the antigen input. We identify three failure modes that explain this behavior. \textit{Antigen blindness} arises because models derive predictions from antibody framework context rather than antigen information, producing nearly identical CDRs regardless of the target. \textit{Vocabulary collapse} reduces predicted amino acids to three to five per position, far below the ground truth distribution in native sequences. The \textit{argmax bottleneck} on the cross-entropy loss via greedy decoding acts as a null predictor that sees only position and loop length and recovers almost all the residues irrespective of the model architecture and the target antigen. We propose a novel encoder-decoder architecture called \af{} that uses a graph neural network (GNN) as the encoder and specialized decoders for sequence-structure co-design. Specifically, we apply framework dropout, gated bottlenecks, and hyperbolic cross attention that prevent the antibody shortcut path. In the decoder, a Mixture Density Network (MDN) sequence head with Potts-like pairwise coupling and annealed Multiple Choice Learning (aMCL) replaces the cross-entropy objective with a multi-component distribution. An antigen cycle consistency head routes gradients through the sequence decoder, forcing predicted distributions to encode antigen identity. On the \chimera{} dataset, \af{} leads the baselines on every interface metric on all three splits (fnat, iRMSD, DockQ, and epitope F1) together with structure and sequence, while achieving roughly a 2x larger effective vocabulary compared to the GNN baselines.

\end{abstract}

\section{Introduction}
\label{sec:intro}

Antibodies recognize and neutralize foreign antigens through their complementarity-determining regions (CDRs) that form the primary binding interface~\citep{potocnakova2016introduction}. Among these, CDR-H3 exhibits the greatest sequence and structural diversity and contributes most to antigen specificity~\citep{chothia1987canonical}. Designing CDR sequences and backbone structures that bind a target epitope is a central challenge in therapeutic antibody engineering~\citep{hummer2022advances}, and a growing body of deep generative models now address this problem by conditioning on antigen structure~\citep{luo2022diffab,kong2022mean,kong2023dymean,verma2023abode,wu2025raad,abir2025abflownet,tan2025dyab}. A natural expectation is that conditioning on the antigen should produce designs tailored to the target. Recent evidence suggests otherwise: unigram frequency alone explains most predictions~\citep{kong2023dymean}, BLOSUM substitution scores predict model outputs as well as learned likelihoods~\citep{uccar2025blosum}, and removing the antigen leaves predictions nearly unchanged~\citep{mason2025inversefoldingbenchmark}. These findings point to a systematic failure of existing conditioning mechanisms, but no prior work has identified the root causes or proposed principled remedies.

We diagnose three causally linked failure modes that explain why graph neural networks (GNNs) with greedy decoding fail to condition on antigen information for CDR design. \textbf{Antigen blindness}: all methods we evaluate show substantially lower recovery at antigen-contacting positions than non-contacting positions, and an antigen-free baseline achieves a better binding quality, confirming that existing conditioning mechanisms contribute little. \textbf{Vocabulary collapse}: GNN methods with greedy decoding produce effective vocabularies of three to five amino acids per position, far below native diversity, with rare but biochemically important residues almost never predicted. \textbf{Argmax bottleneck:} taking the argmax on cross-entropy loss throws the conditioning information away, and serves as a position-and-length null predictor that recovers 35\% of residues with limited diversity of amino acids, and no method in our comparison beats it by more than 5\% recovery on \chimera{}. 

\begin{figure*}[h!]
    \centering
    \includegraphics[width=0.63\linewidth]{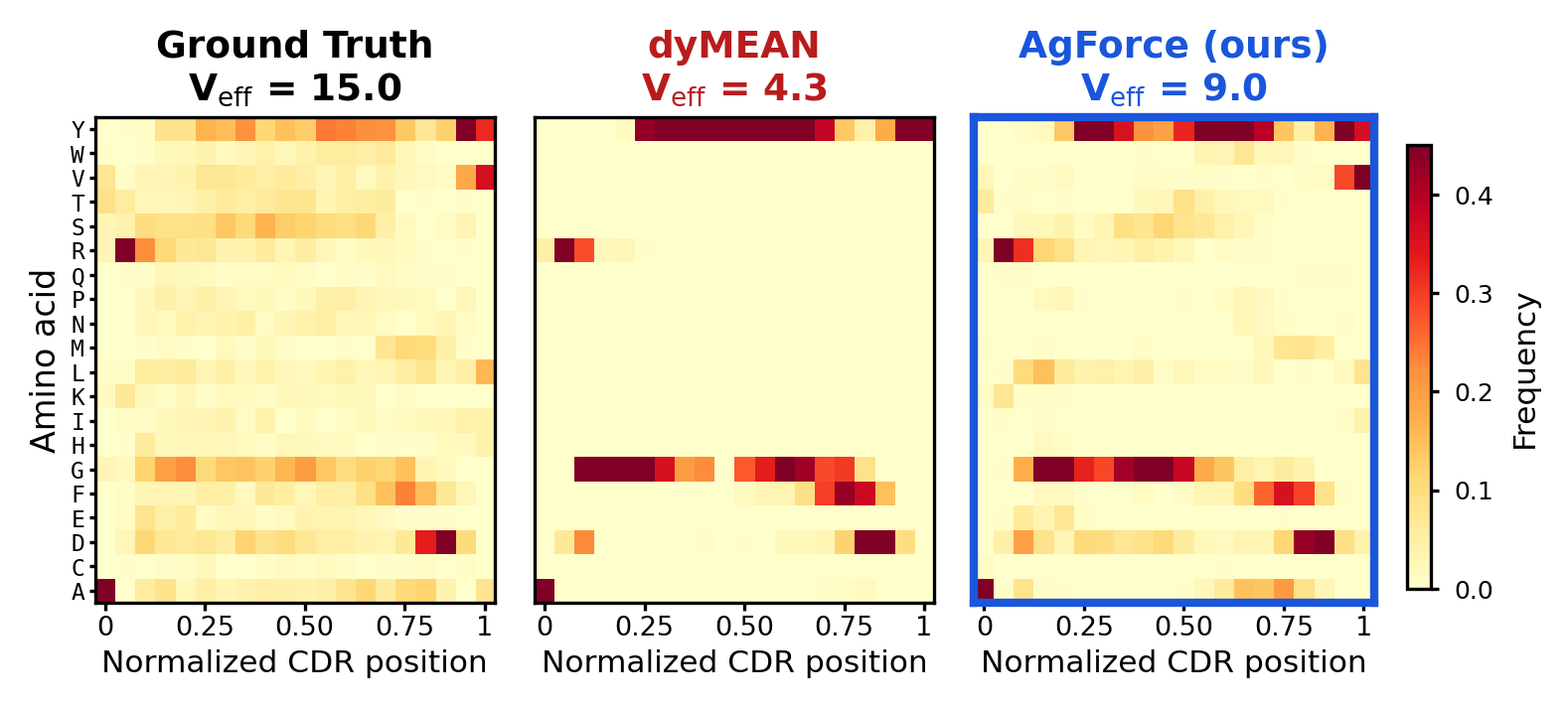}
    \includegraphics[width=0.36\linewidth]{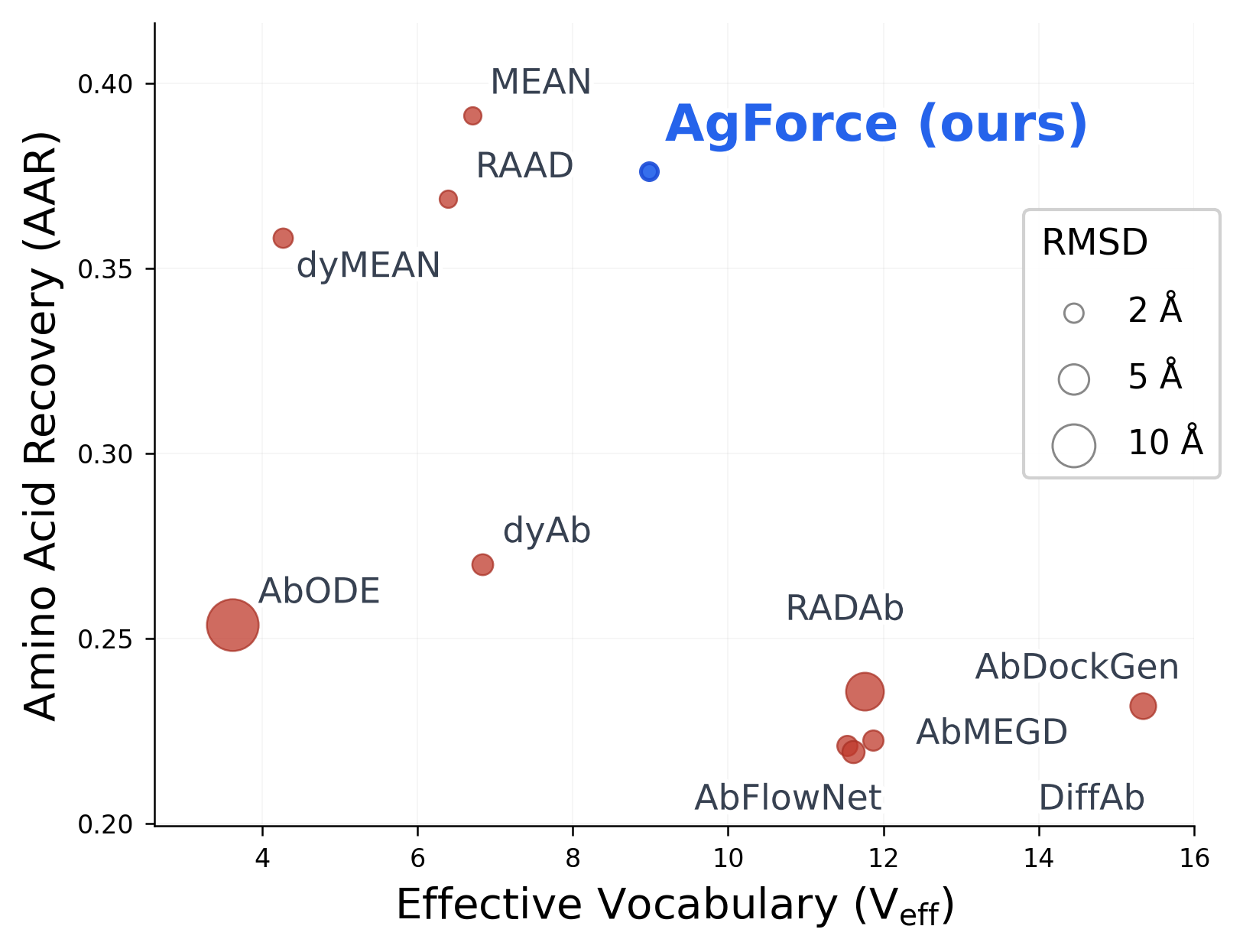}
    \caption{\textbf{(a) Vocabulary collapse:} Heatmap of the ground truth amino acid distribution of CDR H3, the predictions by dyMEAN, and our proposed model, \af{}. (b) AAR vs. effective vocabulary.}
    \label{fig:placeholder}
\end{figure*}

To address these failure modes, we introduce a novel encoder-decoder architecture called \af{}, that uses an equivariant graph neural network (EGNN) as the encoder and specialized decoders for sequence-structure co-design. The decoder employs a mixture density network (MDN) with Potts-like pairwise coupling, and is trained with annealed multiple choice learning (aMCL) so that several modes of the sequence posterior survive to decoding. Similarly, a framework dropout module corrupts the antibody context during training, and an antigen consistency head routes gradients through the sequence decoder to enforce antigen conditioning. For vocabulary collapse, the multi-component MDN permits diverse predictions across components, and Generative Determinantal Point Process (GDPP) spectral regularization penalizes deviations from native diversity. Our contributions:

\begin{enumerate}
    \item A \textbf{diagnostic framework} identifying three causally linked failure modes in equivariant GNN methods for CDR design, showing that greedy decoding strips antigen specificity from the output, and verified empirically across baselines spanning five architectural families.

    \item \textbf{\af{}}, which introduces targeted interventions for each failure: an MDN-Potts sequence head with aMCL training, framework dropout, and an antigen consistency loss, and virtual-node-augmented equivariant message passing with hyperbolic cross-attention. 
    
    \item On the \chimera{} dataset, \af{} improves on every interface and structural metric on all three splits, surpassing the baselines while producing more diverse sequence predictions than any GNN baseline.
\end{enumerate}

\section{Related Work}
\label{sec:related}

\textbf{Equivariant GNN methods.}
MEAN~\citep{kong2022mean} formulates CDR design as E(3)-equivariant graph translation, dyMEAN~\citep{kong2023dymean} extends this to full-atom design with a shadow paratope loss, and RAAD~\citep{wu2025raad} adds a contrastive specificity loss. Recent work on CDR sequence-structure co-design, such as EvoStruct~\cite{ahmed2026evostruct} and ConTact~\cite{ahmed2026contact}, has explored combining the strengths of protein language models (PLMs) with GNNs and performing geometric reasoning explicitly on the antigen-antibody contacts, respectively. 
These methods condition on antigen through spatial message passing and achieve the highest sequence recovery among all paradigms. Multiple independent studies corroborate the conditioning failure: BLOSUM substitution scores predict model outputs as accurately as learned likelihoods~\citep{uccar2025blosum}, removing the antigen chain leaves predictions nearly unchanged~\citep{mason2025inversefoldingbenchmark}, and a BLOSUM lookup table outperforms deep learning methods in binder enrichment~\citep{chinery2024simple}. While these studies document the problem, they fall short of identifying the root causes and proposing methodological interventions.

\textbf{Diffusion, flow, and ODE methods.}
DiffAb~\citep{luo2022diffab} models CDR generation as diffusion on SE(3) $\times$ categorical, while AbFlowNet~\citep{abir2025abflownet} replaces diffusion with GFlowNet trajectory balance. AbMEGD~\citep{chen2025AbMEGD} adds multi-scale encoding, dyAb~\citep{tan2025dyab} applies flow matching, AbODE~\citep{verma2023abode} uses conjoined ODEs, and RADAb~\citep{wang2024radab} augments diffusion with retrieval. These sampling-based methods maintain higher amino acid diversity but achieve substantially lower sequence recovery than GNN methods.

\textbf{Multi-modal sequence prediction and conditioning.}
TERMinator~\citep{li2023terminator} derives Potts energy tables from a GNN encoder for MCMC-based sequence design. PottsMPNN~\citep{birnbaum2026beyond} adds pairwise Potts supervision to ProteinMPNN, improving thermodynamic stability beyond what native sequence recovery captures. MDNs~\citep{bishop1994mixture} model multi-modal distributions but have not been applied to discrete amino acid prediction. Annealed multiple choice learning~\citep{lee2024amcl} addresses winner-take-all collapse in hypothesis ensembles but has not been applied to biological sequences. 

Our MDN-Potts head is built on the premise that greedy decoding cannot distinguish a conditioned model from an antigen-blind one. What a mixture must supply here is not expressivity but \textit{surviving modes}, which is why the coupling matrices are unrolled as differentiable, per-position gated belief propagation inside the head. The annealed multiple choice learning~\cite{lee2024amcl} is a constitutive regularizer where the components converge to the same mode and the head reduces to a single softmax. The same logic dictates where conditioning must be enforced. Classifier-free guidance~\cite{ho2022classifierfree} assumes a sampler whose output is a draw, not a mode, and RAAD's contrastive specificity loss~\cite{wu2025raad} constrains encoder embeddings that the sequence head is free to marginalize away. Our antigen-classification loss instead differentiates through the decoded probabilities themselves, so that no parameter between the antigen and the emitted residue escapes the gradient.


\section{Preliminaries}
\label{sec:preliminaries}

\subsection{Task Definition}
\label{sec:task}

We adopt the formulation from \chimera{}~\citep{ahmed2026chimerabench}. Given an antigen structure $A = \{(s_j, \mathbf{x}_j) \mid j \in V_A\}$, epitope $E \subseteq V_A$, and antibody framework $F = \{(s_i, \mathbf{x}_i) \mid i \in V_\text{FR}\}$, the task is to design CDR residues $R = \{(s_k, \mathbf{x}_k) \mid k \in V_\text{CDR}\}$ that maximize the conditional likelihood while satisfying epitope contact constraints:
\begin{gather}
    R^* = \argmax_{R} \; p_\theta\!\bigl(R \mid A, E, F\bigr), \\
    \text{s.t.} \;\; \mathcal{C}(R, A) \subseteq E, \;\; \mathcal{C}(R, A) \neq \emptyset
    \label{eq:task}
\end{gather}
where each residue is represented by its amino acid type $s_k \in \{1, \ldots, 20\}$, C$\alpha$ coordinate $\mathbf{x}_k \in \mathbb{R}^3$, and $\mathcal{C}(R, A) = \{j \in V_A \mid \exists\, k \in V_\text{CDR}: \|\mathbf{x}_k - \mathbf{x}_j\| < d_c\}$ is the set of antigen residues contacted by the designed CDRs within cutoff $d_c$. We focus on CDR-H3 as the most variable loop and primary determinant of antigen specificity~\citep{chothia1987canonical}.

\subsection{Notation and Graph Construction}
\label{sec:graph}

The antibody-antigen complex is represented as a heterogeneous graph $\mathcal{G} = (V, \mathcal{E})$ where $V = V_\text{HC} \cup V_\text{LC} \cup V_A \cup V_\text{glob} \cup V_\text{vn}$ contains residue nodes from the heavy chain, light chain, and antigen, three global delimiter tokens, and $N_\text{vn} = 3$ virtual nodes~\citep{sestak2024vn_egnn}. Each residue node $i$ carries an amino acid type $s_i \in \{1, \ldots, 20\}$ and four backbone atom coordinates $\mathbf{X}_i \in \mathbb{R}^{4 \times 3}$. The edge set $\mathcal{E}$ is partitioned into 10 typed subsets covering intra-chain (radial, sequential, KNN), inter-chain (radial, KNN), global-to-chain, and virtual-node-to-epitope/CDR connectivity (Table~\ref{tab:edge_types} in Appendix~\ref{app:graph}). Each edge carries a feature vector encoding edge type, relative position, pairwise distance RBFs, quaternion orientation, and local frame directions.

\subsection{Failure Modes}
\label{sec:diagnosis}

We re-train and evaluate CDR-H3 design methods on the \chimera{} dataset. The diagnostics are grouped around three failure modes.

\textbf{1. Antigen Blindness:}
\label{sec:diagnosis_blind}
If a model conditions on the antigen, its predictions should depend on antigen identity beyond what position alone dictates, but they do not. We correlate each method's empirical substitution matrix $P(\text{pred} \mid \text{true})$ against a positional PWM null built from the test set (Spearman, off-diagonal). Every method matches the null at $r = 0.59$ to $0.80$ (all $p < 10^{-35}$), and none exceeds $|r| = 0.12$ against the BLOSUM45 substitution matrix. The predictions are positional-frequency draws rather than evolutionary or binding-driven substitutions. Moreover, splitting amino acid recovery by antigen contact presents a similar picture from the binding side. For example, MEAN recovers 48.4\% at non-contact positions and 20.3\% at contacts.

\textbf{2. Vocabulary Collapse:}
\label{sec:diagnosis_vocab}
We measure predicted amino acid diversity via the effective vocabulary $\text{EV} = \exp\!\bigl(-\sum_a p(a) \log p(a)\bigr)$. The native CDR-H3 sequences exhibit EV $= 14.9$. Whereas the GNN methods with greedy decoding collapse to EV 3.7--6.9, overrepresenting tyrosine and glycine (MEAN by 25.2 and 7.2 percentage points, dyMEAN by 24.4 and 19.2) while rare interface-critical residues such as tryptophan, cysteine, and methionine are predicted at near-zero frequency. Motif diversity is correspondingly impoverished, with 93 unique bigrams for MEAN and 15 for dyMEAN against 353 in native sequences. Sampling-based methods approach native diversity (EV 11.6--15.3) but at the cost of accuracy (AAR 0.22--0.24). The ordering becomes apparent, since the six methods with the lowest EV are exactly the six with the highest AAR.

\textbf{3. Argmax Bottleneck:}
\label{sec:diagnosis_ce}
Standard CDR design methods minimize per-position cross-entropy $\mathcal{L}_\text{CE} = -\sum_{i} \log p_\theta(s_i \mid \mathbf{x})$ and decode by taking the argmax. Our analysis show that argmax decoding on the cross-entropy loss is one such cause of failure. Write $F_i$ for the antibody-side context at position $i$ (index, loop length, framework) and $A$ for the antigen.

\begin{proposition}[Conditioning Budget]
\label{prop:budget}
The expected per-position cross-entropy of any model is bounded below by $H(s_i \mid F_i, A)$, with equality only at the true conditional. The antigen-blind optimum is $H(s_i \mid F_i)$, so the entire cross-entropy reduction that antigen conditioning can capture equals the conditional mutual information $I(s_i; A \mid F_i)$.
\end{proposition}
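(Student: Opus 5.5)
The plan is to write the expected cross-entropy as true conditional entropy plus an expected KL divergence, using Gibbs' inequality. Fix a model $q_\theta(s_i \mid F_i, A)$; any model whose input $\mathbf{x}$ is a function of $(F_i, A)$ can be written this way. Let $p(s_i, F_i, A)$ denote the data distribution. Then
\begin{equation*}
\mathbb{E}\bigl[-\log q_\theta(s_i \mid F_i, A)\bigr] = H(s_i \mid F_i, A) + \mathbb{E}_{F_i, A}\Bigl[ D_{\mathrm{KL}}\bigl(p(\cdot \mid F_i, A) \,\|\, q_\theta(\cdot \mid F_i, A)\bigr)\Bigr].
\end{equation*}
This follows by adding and subtracting $\log p(s_i \mid F_i, A)$ inside the expectation. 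Since KL is nonnegative, the lower bound $H(s_i \mid F_i, A)$ follows. Equality holds if and only if the KL vanishes for almost every $(F_i, A)$, i.e.\ $q_\theta = p(\cdot \mid F_i, A)$ almost surely, which is the true conditional.

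For the antigen-blind claim, I repeat the argument for models restricted to depend only on $F_i$, i.e.\ $q(s_i \mid F_i)$. The same decomposition with $p(\cdot \mid F_i)$ in place of $p(\cdot \mid F_i, A)$ shows that the minimum over this class is $H(s_i \mid F_i)$. It is attained at the marginalized conditional $p(s_i \mid F_i) = \mathbb{E}_{A \mid F_i}\, p(s_i \mid F_i, A)$.

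Finally, the gap between the two optima is $H(s_i \mid F_i) - H(s_i \mid F_i, A)$, which equals $I(s_i; A \mid F_i)$ by the definition (chain rule) of conditional mutual information. Because the blind optimum is achievable and no model beats $H(s_i \mid F_i, A)$, this gap is exactly the maximal cross-entropy reduction attributable to conditioning on $A$.

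The only delicate point is measure-theoretic and interpretive: making precise that "any model" means a conditional distribution measurable in $(F_i, A)$, and that "equality only at the true conditional" holds up to null sets. I would state this assumption explicitly and treat $A$ as discrete or as having a density so that the entropies are well-defined. I would also restrict to the finite alphabet of 20 amino acids, so that $H(s_i \mid \cdot) \le \log 20$ is finite and all the subtractions are legitimate.
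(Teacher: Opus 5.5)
Your proposal is correct and follows essentially the paper's proof: the paper likewise adds and subtracts $\log p^*(s_i \mid F_i, A)$ to write the risk as $H(s_i \mid F_i, A)$ plus an expected KL term, and invokes Gibbs' inequality for the bound and the equality case. It then repeats the same algebra within the $F_i$-measurable class to obtain $H(s_i \mid F_i)$ at the antigen-marginalized conditional, and reads off the gap as $I(s_i; A \mid F_i)$ by definition; your explicit remarks on null sets and the finite 20-letter alphabet make precise what the paper leaves implicit.
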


\begin{proposition}[Argmax Identifiability]
\label{prop:identifiability}
Let $\mathcal{M} = \{i : \arg\max_a p^*(a \mid F_i, A) \neq \arg\max_a p^*(a \mid F_i)\}$ be the positions where the antigen moves the modal residue. A perfectly conditioned model and a perfectly antigen-blind model produce identical greedy decodes outside $\mathcal{M}$, so their amino acid recovery differs by at most $|\mathcal{M}| / L$.
\end{proposition}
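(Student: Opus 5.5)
The plan is to make the two idealized models explicit and compare their greedy decodes position by position. The \emph{perfectly conditioned} model is the one attaining the lower bound of Proposition~\ref{prop:budget}, namely the true conditional $p^*(\cdot \mid F_i, A)$. The \emph{perfectly antigen-blind} model is the cross-entropy optimum over predictors that see only $F_i$, which is the marginal $p^*(\cdot \mid F_i)$. The optimality claims come from the equality case of Proposition~\ref{prop:budget}: expected cross-entropy equals entropy plus a KL term, which vanishes only at the true conditional. The greedy decodes are therefore $\hat a_i^{\mathrm{c}} = \arg\max_a p^*(a \mid F_i, A)$ and $\hat a_i^{\mathrm{b}} = \arg\max_a p^*(a \mid F_i)$. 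To make ``identical'' well defined, I will fix a common deterministic tie-breaking rule, such as the lowest amino-acid index, applied to both models.

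The first claim is then immediate from the definition of $\mathcal{M}$. For $i \notin \mathcal{M}$ the two argmaxes coincide by definition, so $\hat a_i^{\mathrm{c}} = \hat a_i^{\mathrm{b}}$. If ties are present, $\mathcal{M}$ should be read with the argmax taken under the same tie-breaking rule.

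For the recovery bound, write $\mathrm{AAR} = \frac{1}{L}\sum_{i=1}^{L} \mathbf{1}[\hat a_i = s_i]$ for a fixed complex with native sequence $s$. The difference between the two models is
\[
\mathrm{AAR}^{\mathrm{c}} - \mathrm{AAR}^{\mathrm{b}} = \frac{1}{L}\sum_{i=1}^{L}\bigl(\mathbf{1}[\hat a_i^{\mathrm{c}} = s_i] - \mathbf{1}[\hat a_i^{\mathrm{b}} = s_i]\bigr).
\]
Every term with $i \notin \mathcal{M}$ is zero. Every term with $i \in \mathcal{M}$ lies in $\{-1, 0, 1\}$. The absolute difference is therefore at most $|\mathcal{M}|/L$. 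The same bound passes to expected recovery by averaging over complexes, or over the randomness of $s$, with $\mathbb{E}|\mathcal{M}|/L$ in place of $|\mathcal{M}|/L$.

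I expect the only real subtlety to be interpretive rather than computational. First, the phrase ``perfectly conditioned / antigen-blind'' has to be pinned to the Bayes-optimal predictors, and this identification should rest on Proposition~\ref{prop:budget} rather than be assumed. Second, tie-breaking must be handled consistently so that the set $\mathcal{M}$ exactly captures the disagreements. A closing remark can connect back to Proposition~\ref{prop:budget}: a large $I(s_i; A \mid F_i)$ does not force $i \in \mathcal{M}$. The mutual information can be large while the mode is unchanged, which is why recovery cannot detect conditioning outside $\mathcal{M}$.
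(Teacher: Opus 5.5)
Your proposal is correct and follows essentially the same route as the paper's proof: outside $\mathcal{M}$ the indicator terms cancel by definition (with a common fixed tie-breaking rule, which the paper also imposes), and each remaining summand lies in $\{-1,0,1\}$, giving the bound $|\mathcal{M}|/L$. The paper additionally notes that the bound is attained when $\hat s_i = s_i \neq \bar s_i$ on all of $\mathcal{M}$. For your closing remark, it gives an explicit construction in which $\mathcal{M} = \emptyset$ while $I(s_i; A \mid F_i)$ is large.
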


\begin{corollary}[Vocabulary Collapse under Argmax Decoding]
\label{cor:vocab_collapse}
Greedy decoding emits a deterministic function of the conditioning, so the effective vocabulary of the output is bounded by the number of modes the model actually realizes, irrespective of how much entropy the underlying distribution holds.
\end{corollary}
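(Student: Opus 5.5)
The plan is to make the corollary precise and then reduce it to a counting statement about images of a deterministic map. Fix a position $i$. Greedy decoding emits $\hat s_i = g(F_i, A) := \arg\max_a p_\theta(a \mid F_i, A)$, with a fixed deterministic tie-breaking rule, so $\hat s_i$ is a measurable function of the conditioning. The output distribution $q_i(a) = \Pr[\hat s_i = a]$ is the pushforward of the data distribution over $(F_i, A)$ under $g$. Its support is therefore contained in the set of \emph{realized modes}
\[
\mathcal{R}_i = \{\, g(F_i, A) : (F_i, A) \in \operatorname{supp} \,\},
\]
which is the set of residues that actually win the argmax somewhere on the data. The effective vocabulary of a distribution on a finite set is at most the size of its support, because entropy is maximized by the uniform distribution there. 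This gives
\[
\mathrm{EV}(q_i) = \exp H(q_i) \le |\mathcal{R}_i| .
\]
For the pooled output across positions, as used in the diagnostics, the support of the mixture $\sum_i w_i q_i$ lies in $\bigcup_i \mathcal{R}_i$. The same argument then bounds the pooled EV by $\bigl|\bigcup_i \mathcal{R}_i\bigr|$.

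Next I would make the phrase ``irrespective of how much entropy the underlying distribution holds'' precise. The model distribution $p_\theta(\cdot \mid F_i, A)$ may have entropy close to $\log 20$ at every input. Even so, the bound above depends only on $\mathcal{R}_i$, and $H(p_\theta)$ never enters it. To show the bound is tight and the gap can be arbitrarily large, I would build a concrete family. Take $p_\theta(a \mid F_i, A) = (1-\epsilon)/20 + \epsilon\,\delta_{a^\star}$ with a single fixed $a^\star$. Its entropy tends to $\log 20$ as $\epsilon \to 0$, yet $\mathcal{R}_i = \{a^\star\}$, so the greedy output has EV equal to $1$.

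Finally I would link this to Proposition~\ref{prop:identifiability} to show why the set of realized modes is small in practice. For an antigen-blind model, $g$ depends only on $F_i$, so $\mathcal{R}_i$ collapses to the single positional mode. For a perfectly conditioned model, the modes can differ from the blind ones only at positions in $\mathcal{M}$. Consequently the realized set exceeds the positional-PWM modes only through antigen-driven argmax flips. This step is interpretive rather than strictly required for the corollary.

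The main obstacle is definitional, not technical. I have to fix which distribution EV is computed over (per-position or pooled) and state the tie-breaking convention explicitly, because both affect what $\mathcal{R}_i$ is. Once those are fixed, the argument reduces to ``entropy $\le$ log support size''.
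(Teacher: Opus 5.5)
Your proposal is correct and follows essentially the same route as the paper: determinism of greedy decoding confines the output distribution's support to the realized-mode set, the bound $H \le \log(\text{support size})$ then bounds $\mathrm{EV}$ by the number of realized modes, and a near-uniform $p_\theta$ with an $\epsilon$-bump on one residue shows the model's own entropy never enters the bound. The paper works directly with the empirical distribution pooled over all test positions, which is your pooled case, and also states the equality condition ($\hat q$ uniform on $\mathcal{S}$). One caution on your optional third step: $F_i$ includes loop length and framework sequence, which vary across complexes, so an antigen-blind decoder's $\mathcal{R}_i$ need not be a single positional mode, although nothing in your core argument depends on this.
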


The proofs are in Appendix~\ref{app:proofs}, while the empirical results are in Appendix~\ref{app:conditioning}.

\section{Method}
\label{sec:method}

\noindent
\begin{minipage}[t]{0.5\textwidth}
\vspace{0pt}
Figure~\ref{fig:architecture-overall} shows the overall pipeline. The encoder corrupts heavy-chain framework embeddings via dropout, encodes the complex with a EGNN, and computes CDR-to-epitope attention on the Lorentz hyperboloid. The decoder is an MDN-Potts head trained via annealed multiple choice learning. We detail each component below; training and inference algorithms are in Appendix~\ref{app:algorithms}.
\end{minipage}%
\hfill
\begin{minipage}[t]{0.45\textwidth}
\vspace{0pt}
\centering
\includegraphics[width=\linewidth]{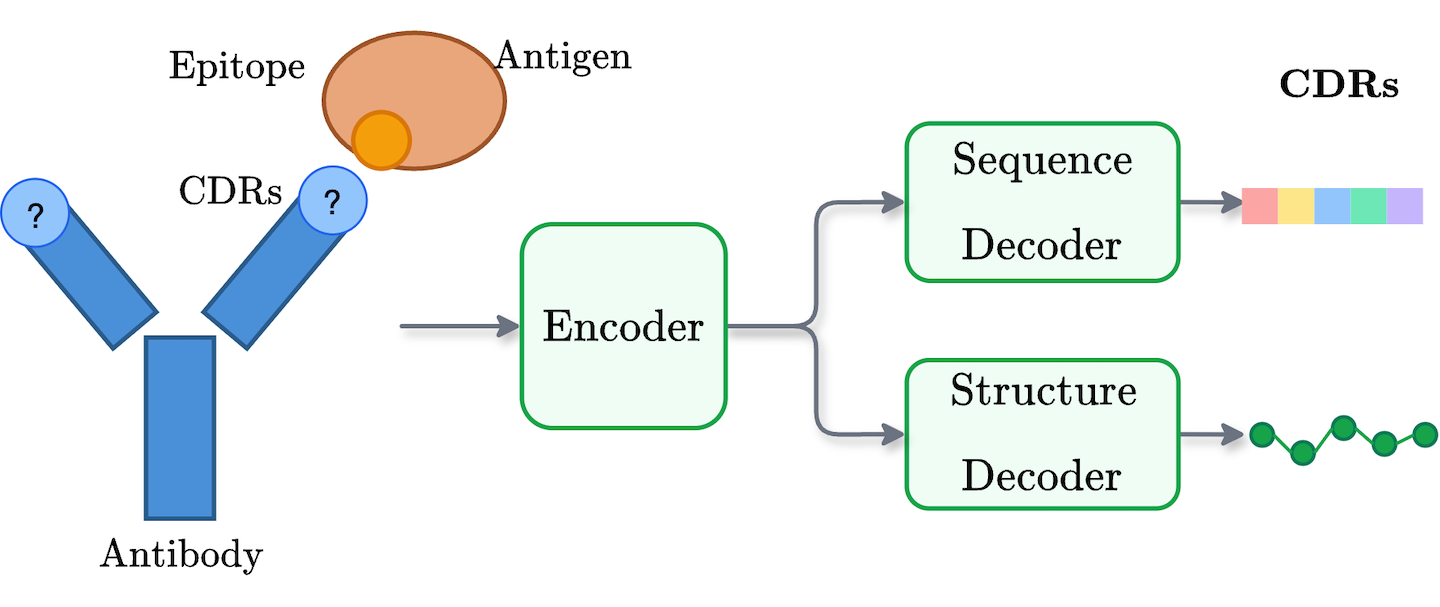}
\captionof{figure}{Overall pipeline of \af{}.}
\label{fig:architecture-overall}
\end{minipage}

\begin{figure*}[h!]
    \centering
    \includegraphics[width=0.75\linewidth]{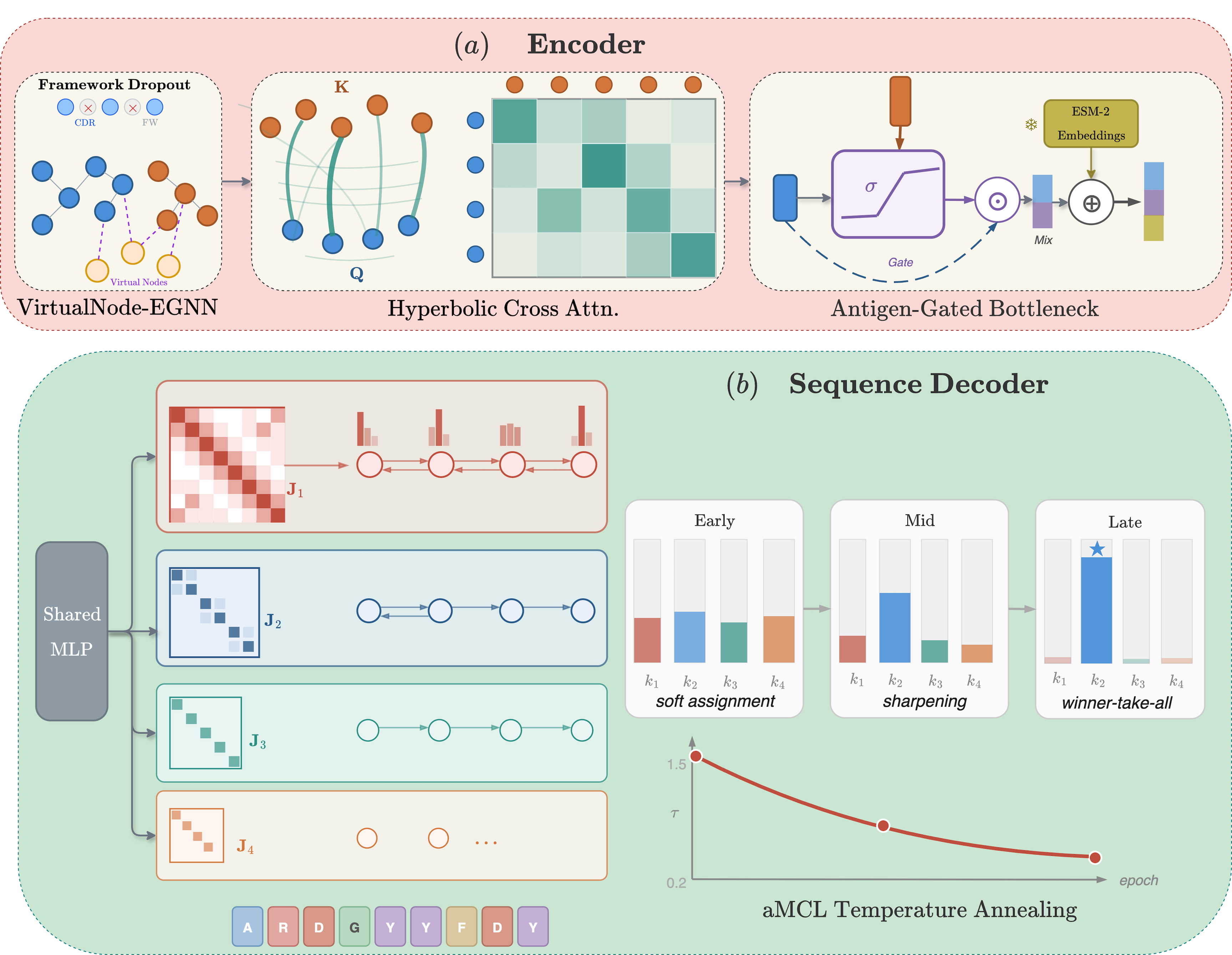}
    \caption{\af{} architecture. \textbf{Encoder}: Framework dropout corrupts heavy-chain framework embeddings before the EGNN encodes the complex. Hyperbolic cross-attention combines CDR and epitope embeddings with projected ESM-2 features. \textbf{Decoder}: The MDN-Potts head predicts amino acid distributions through $K=4$ mixture components with pairwise coupling, trained via annealed multiple choice learning (aMCL). }
    \label{fig:architecture}
\end{figure*}

\subsection{Feature Encoding and Framework Dropout}
\label{sec:features}

Each residue in the antibody-antigen complex is represented by a 108-dimensional feature vector encoding backbone geometry (sinusoidal position embeddings, bond distance RBFs, dihedral angles, local frame directions), amino acid identity (masked to zero for CDR positions during training), interface complementarity features, and a segment type embedding (full details in Appendix~\ref{app:graph}). A dual-path MLP processes geometric and chemical features through separate pathways with SiLU activations, fuses them, and projects to embedding dimension $d = 128$. Epitope residues receive an additional learnable embedding.

To address antigen blindness (Section~\ref{sec:diagnosis_blind}), we apply framework dropout before the GNN. During training, each heavy-chain framework residue embedding is independently set to zero with probability $p = 0.3$:
\begin{equation}
    \mathbf{h}_i^\text{train} = \begin{cases}
        \mathbf{0} & \text{if } i \in V_\text{FR}^\text{HC} \text{ and } m_i = 1 \\
        \mathbf{h}_i & \text{otherwise}
    \end{cases}, \quad m_i \sim \text{Bernoulli}(p)
\end{equation}
Because the corruption is applied \emph{before} message passing, its effect propagates through all subsequent GNN layers, forcing the model to extract information from the antigen pathway rather than relying on the antibody framework shortcut.

\subsection{Virtual-Node EGNN}
\label{sec:vn}

Operating on the graph $\mathcal{G}$, $N_\text{vn} = 3$ virtual nodes with learnable features and coordinates create a two-hop information channel between epitope and CDR residues, addressing the over-squashing problem~\citep{alon2021bottleneck} where information from distant epitope residues dilutes through sequential message passing.

The GNN consists of 5 relation-aware EGNN layers~\citep{satorras2021n,wu2025raad}. Each layer updates node features and coordinates simultaneously. For edge $(i, j)$ of type $t$, the message is:
\begin{equation}
    \mathbf{m}_{ij} = \text{MLP}_\text{msg}\!\left([\mathbf{h}_i,\; \mathbf{h}_j,\; \text{vec}(\Delta\mathbf{x}_i \Delta\mathbf{x}_j^\top),\; \mathbf{e}_{ij}]\right)
\end{equation}
where $\Delta\mathbf{x}_i = \mathbf{x}_i - \mathbf{x}_j$ and $\text{vec}(\cdot)$ flattens the outer product of coordinate differences into a radial feature. The Gram matrix entries are dot products of displacement vectors, which are invariant to rotation and translation. Node features are updated via per-type linear aggregation:
\begin{equation}
    \mathbf{h}_i' = \mathbf{h}_i + \text{MLP}_\text{node}\!\left(\left[\mathbf{h}_i,\; \sum_{t=0}^{9} \mathbf{W}_t \sum_{j \in \mathcal{N}_t(i)} \mathbf{m}_{ij}\right]\right)
\end{equation}
where $\mathbf{W}_t$ is a type-specific projection matrix and $\mathcal{N}_t(i)$ is the set of neighbors under edge type $t$. Coordinates are updated equivariantly:
\begin{equation}
    \mathbf{x}_i' = \mathbf{x}_i + \sum_{t} \text{mean}_{j \in \mathcal{N}_t(i)}\!\left(\Delta\mathbf{x}_{ij} \odot \text{MLP}_t^\text{coord}(\mathbf{m}_{ij})\right)
\end{equation}
The $\Delta\mathbf{x} \odot \text{scalar}$ structure ensures equivariance by construction~\citep{satorras2021n}. After 5 layers, the GNN produces residue embeddings $\mathbf{h} \in \mathbb{R}^{N \times 256}$ and updated coordinates $\mathbf{Z} \in \mathbb{R}^{N \times 4 \times 3}$.

\begin{proposition}[SE(3)-Equivariance of \af{}]
\label{prop:equivariance}
Coordinate predictions are SE(3)-equivariant and sequence predictions are SE(3)-invariant (proof in Appendix~\ref{app:proofs}).
\end{proposition}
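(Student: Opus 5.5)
The plan is an induction over the five EGNN layers, followed by checks of the components downstream of the encoder. Let $g=(R,\mathbf{t})\in SE(3)$ act on every coordinate, including the virtual-node coordinates, as $\mathbf{x}\mapsto R\mathbf{x}+\mathbf{t}$. The inductive hypothesis at layer $\ell$ is that node features $\mathbf{h}^{(\ell)}$ are invariant and coordinates $\mathbf{X}^{(\ell)}$ are equivariant under $g$.

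\textbf{Base case.} Every input feature must be SE(3)-invariant. I would go through the list in Section~\ref{sec:features} and Appendix~\ref{app:graph}:
\begin{itemize}
\item sinusoidal position embeddings, amino-acid identities, segment and epitope embeddings, and edge types are coordinate-free;
\item bond-distance RBFs and pairwise-distance RBFs depend only on norms of differences;
\item dihedral angles are invariant under proper rotations. They are not invariant under reflections, which is why the claim is stated for SE(3) and not E(3);
\item quaternion orientations and local-frame directions are expressed relative to backbone frames built from $N, C_\alpha, C$. Under $g$, each frame rotates as $F\mapsto RF$, so the relative quantities $F_i^\top F_j$ and $F_i^\top(\mathbf{x}_j-\mathbf{x}_i)$ are unchanged.
\end{itemize}
Framework dropout masks embeddings independently of coordinates, so it preserves invariance. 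Graph connectivity (radial and KNN edges) depends only on distances, so the edge sets $\mathcal{N}_t(i)$ are invariant as well.

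\textbf{Inductive step.} Translation cancels in $\Delta\mathbf{x}_i$, which then transforms as $R\Delta\mathbf{x}_i$. The radial term must be read as the Gram/inner-product quantity the text describes, since a raw outer product would transform as $R(\cdot)R^\top$. With that reading, $\Delta\mathbf{x}_i^\top\Delta\mathbf{x}_j$ (or $\|\Delta\mathbf{x}\|^2$) is invariant. Hence $\mathbf{m}_{ij}$ is invariant, and so is $\mathbf{h}_i'$, because the per-type sums and the MLPs act only on invariant inputs. For the coordinate update, $\sum_t \mathrm{mean}_j\,\Delta\mathbf{x}_{ij}\,\phi_t(\mathbf{m}_{ij})$ is a linear combination of equivariant displacement vectors with invariant scalar weights. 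It therefore maps to $R(\cdot)$, and adding $\mathbf{x}_i\mapsto R\mathbf{x}_i+\mathbf{t}$ gives equivariance of $\mathbf{x}_i'$. After five layers, $\mathbf{h}$ is invariant and $\mathbf{Z}$ is equivariant.

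\textbf{Downstream components.} The hyperbolic cross-attention, the ESM-2 projection, the MDN-Potts head with its belief-propagation unrolling, and the aMCL selection all consume only $\mathbf{h}$ and sequence information, never raw coordinates. The sequence distributions are therefore compositions of functions of invariant inputs, which makes them invariant. Any structure output that is read off $\mathbf{Z}$ directly, or refined by further EGNN layers, inherits equivariance by the same argument.

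\textbf{Main obstacle.} The difficulty is not the induction itself but auditing the feature pipeline to confirm that no coordinate enters non-invariantly. The likely culprits are:
\begin{itemize}
\item the literal outer product $\mathrm{vec}(\Delta\mathbf{x}_i\Delta\mathbf{x}_j^\top)$. I would resolve this by identifying it with its invariant Gram entries, as the text asserts;
\item any absolute-frame direction features;
\item the learnable virtual-node coordinates, which are fixed parameters and not tied to the complex. For these I would either show that they are initialized relative to the complex (for example, centroid plus an offset in an equivariant local frame) or restrict the claim accordingly. Only then can they transform under $g$.
\end{itemize}
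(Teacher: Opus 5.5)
Your proposal is correct and follows essentially the paper's own route. You show that messages and node features are invariant and that the $\Delta\mathbf{x}\odot$ scalar coordinate update in the VN-EGNN is equivariant, then observe that the hyperbolic attention, ESM-2 projection and MDN-Potts head see only invariant features. Your explicit audit of input features and edge sets, and your virtual-node caveat, go beyond the paper, which simply lets $g$ act on all input coordinates (the same convention you adopt, so your induction closes under it). On the outer-product worry: the paper writes the entries as $\langle \Delta\mathbf{x}_i^{(a)}, \Delta\mathbf{x}_j^{(b)} \rangle$, i.e.\ it treats $\Delta\mathbf{x}_i$ as a stack of per-atom displacement rows. The literal product then maps to $\Delta\mathbf{x}_i R^\top R\,\Delta\mathbf{x}_j^\top$ and is already an invariant Gram matrix, so no reinterpretation is needed.
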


The group is SE(3) by design. The quaternion orientation features and the residue local frames are built from cross products, which are pseudovectors: $(R\mathbf{u}) \times (R\mathbf{v}) = \det(R)\, R(\mathbf{u} \times \mathbf{v})$. A reflection will flip them. This is a correct behavior for proteins, whose L-amino acid backbones are chiral, and a reflection-invariant model would assign equal likelihood to a mirror-image complex that cannot exist.

\subsection{Hyperbolic Cross-Attention}
\label{sec:hyp_attn}

\af{} performs CDR-to-epitope cross-attention on the Lorentz hyperboloid~\citep{nickel2017poincare} rather than standard Euclidean attention. Hyperbolic spaces represent hierarchical relationships naturally because their volume grows exponentially with distance, matching the structure of antibody-antigen binding where global epitope geometry constrains local amino acid choices. Given CDR embeddings $\mathbf{h}_\text{cdr} \in \mathbb{R}^{L \times D}$ and epitope embeddings $\mathbf{h}_\text{epi} \in \mathbb{R}^{E \times D}$, queries and keys are projected into $H $ heads and mapped onto the hyperboloid by computing $x_0 = \sqrt{1/c + \|\mathbf{x}_{1:}\|^2}$ with curvature $c$. Attention scores use pairwise hyperbolic distances:
\begin{align}
    d_\mathcal{H}(\mathbf{q}, \mathbf{k}) &= \frac{1}{\sqrt{c}} \operatorname{arccosh}\!\bigl(-c \langle \mathbf{q}, \mathbf{k} \rangle_\mathcal{L}\bigr) \\
    \alpha_{ij} &= \operatorname{softmax}_j\!\left(\frac{-d_\mathcal{H}(\mathbf{q}_i, \mathbf{k}_j)}{\sqrt{D/H}}\right)
\end{align}
where $\langle \cdot, \cdot \rangle_\mathcal{L}$ denotes the Minkowski inner product $\langle \mathbf{q}, \mathbf{k} \rangle_\mathcal{L} = -q_0 k_0 + \sum_{d=1}^{D/H} q_d k_d$. The output is aggregated in Euclidean space via a weighted sum of value vectors.

\paragraph{Antigen gated bottleneck.} The attention output $\mathbf{o}_i$ passes through a gated bottleneck before reaching the sequence head:
\begin{equation}
    \mathbf{g}_i = \sigma(\mathbf{W}_g \mathbf{o}_i + \mathbf{b}_g), \quad
    \tilde{\mathbf{h}}_i = \alpha \cdot (\mathbf{h}_i^\text{cdr} \odot \mathbf{g}_i) + (1 - \alpha) \cdot \mathbf{h}_i^\text{cdr}
\end{equation}
where $\alpha = \sigma(\alpha_\text{logit}) \cdot 2\alpha_0$ is a learnable mixing coefficient initialized near $\alpha_0 = 0.5$. The output is the concatenation $[\tilde{\mathbf{h}}_i,\; \mathbf{o}_i] \in \mathbb{R}^{512}$. 

\subsection{Mixture Density Network (MDN) with Pairwise Coupling}
\label{sec:mdn}

To keep more than one mode alive through decoding, \af{} replaces the linear sequence head with an MDN. A shared layer applies layer normalization, a linear projection, SiLU activation, and dropout. From this, $K = 4$ component heads each produce logits $\boldsymbol{\ell}_k \in \mathbb{R}^{L \times 20}$, and a mixing head produces per-position weights $\boldsymbol{\pi} \in \mathbb{R}^{L \times K}$ via softmax.

Each component $k$ is equipped with a symmetric coupling matrix $\mathbf{J}_k \in \mathbb{R}^{20 \times 20}$~\citep{marks2011protein} that captures pairwise amino acid preferences between adjacent positions. The component logits are refined through two rounds of belief propagation:
\begin{align}
    \mathbf{m}_{i \to i+1} &= \boldsymbol{b}_i \cdot \frac{\mathbf{J}_k + \mathbf{J}_k^\top}{2}, \quad
    \mathbf{m}_{i+1 \to i} = \boldsymbol{b}_{i+1} \cdot \frac{\mathbf{J}_k + \mathbf{J}_k^\top}{2} \\
    \boldsymbol{\ell}_k^{(i)} &\leftarrow \boldsymbol{\ell}_k^{(i)} + g_i \cdot \bigl(\mathbf{m}_{i-1 \to i} + \mathbf{m}_{i+1 \to i}\bigr)
\end{align}
where $\boldsymbol{b}_i = \text{softmax}(\boldsymbol{\ell}_k^{(i)})$ are soft beliefs and $g_i = \sigma(\text{MLP}(\mathbf{h}_i))$ is a learned gate. After two rounds, the coupling matrices learn which amino acid pairs are compatible at adjacent CDR positions.

\paragraph{Annealed Multiple Choice Learning (aMCL).} We adopt aMCL~\citep{lee2024amcl} rather than standard mixture likelihood. The loss for each component combines cross-entropy and pairwise energy:
\begin{equation}
    \ell_k = \mathcal{L}_\text{CE}^{(k)} + \lambda_\text{pair} \cdot E_\text{pair}^{(k)}, \quad
    E_\text{pair}^{(k)} = \frac{1}{L{-}1} \sum_{i=1}^{L-1} \boldsymbol{b}_i^\top \frac{\mathbf{J}_k + \mathbf{J}_k^\top}{2} \boldsymbol{b}_{i+1}
\end{equation}
Components are assigned soft Boltzmann weights based on their losses:
\begin{equation}
    w_k = \frac{\exp(-\ell_k / \tau)}{\sum_{k'} \exp(-\ell_{k'} / \tau)}, \quad
    \mathcal{L}_\text{seq} = \sum_{k=1}^{K} w_k \cdot \ell_k
    \label{eq:amcl}
\end{equation}
Temperature $\tau$ anneals from $\tau_\text{start}$ to $\tau_\text{end}$ overtime with training.
At high temperature, all components receive roughly equal gradient and explore. As $\tau$ decreases, the assignment sharpens toward winner-take-all, encouraging specialization.

As $\tau \to 0$, the Boltzmann weights harden into the winner-take-all assignment of classical multiple choice learning, under which each component converges to the conditional distribution of the subset it wins~\citep{guzman2012mcl, cortes2025wta}. The mixture therefore covers several modes of the sequence posterior rather than one. During inference, the decoder selects a single component and then takes its argmax, so the component diversity raises the number of realized modes. 

\paragraph{GDPP diversity regularization.} We regularize the mixture with a Generative Determinantal Point Process (DGPP) loss~\citep{gdpp2019} that matches eigenvalue spectra of kernel matrices from predicted and true distributions:
\begin{equation}
    \mathcal{L}_\text{GDPP} = \left\|\text{eigenvalues}\!\left(\mathbf{K}_\text{pred}\right) - \text{eigenvalues}\!\left(\mathbf{K}_\text{true}\right)\right\|_2^2
\end{equation}
where $\mathbf{K}_\text{pred} = \mathbf{P}\mathbf{P}^\top + \epsilon \mathbf{I}$ with $\mathbf{P}$ the softmax probabilities from the active component.

\subsection{Antigen Classification Loss}
\label{sec:cycle}

This loss forces the predicted CDR probability distributions to encode antigen-specific information. Given a batch of $B$ complexes, let $\mathbf{p}_i \in \mathbb{R}^{L_i \times 20}$ denote the softmax probabilities from the sequence head for complex $i$, and let $\mathbf{a}_i \in \mathbb{R}^{D}$ denote the mean-pooled antigen embedding from the GNN. The classification head first computes a differentiable ``soft sequence'' embedding:
\begin{equation}
    \mathbf{c}_i = \text{MLP}\!\left(\frac{1}{L_i} \sum_{j=1}^{L_i} \mathbf{p}_i[j] \cdot \mathbf{E}_\text{AA}\right)
    \label{eq:soft_encode}
\end{equation}
where $\mathbf{E}_\text{AA} \in \mathbb{R}^{20 \times D}$ are learnable amino acid embeddings. The loss is an InfoNCE~\cite{chen2020simple} objective over the batch:
\begin{equation}
    \mathcal{L}_\text{cls} = -\frac{1}{B}\sum_{i=1}^{B} \log \frac{\exp(\mathbf{c}_i^\top \mathbf{a}_i)}{\sum_{k=1}^{B} \exp(\mathbf{c}_i^\top \mathbf{a}_k)}
    \label{eq:cycle_loss}
\end{equation}
If the model ignores the antigen and predicts identical CDRs for all complexes, $\mathbf{c}_i \approx \mathbf{c}_j$ and classification collapses to $1/B$. The key property is that gradients flow through the softmax output of the sequence head and back through the entire decoder pathway. This distinguishes it from embedding-level contrastive losses where the gradient bypasses the sequence decoder entirely.

\subsection{Joint Training Objective}
\label{sec:objective}

The full training objective combines five loss terms spanning structure prediction, sequence prediction, interface geometry, sequence diversity, and antigen forcing:
\begin{equation}
    \mathcal{L} = \mathcal{L}_\text{seq} + \alpha \mathcal{L}_\text{coord} + \delta \mathcal{L}_\text{shadow} + \epsilon \mathcal{L}_\text{GDPP} + \lambda_\text{cls} \mathcal{L}_\text{cls}
    \label{eq:full_loss}
\end{equation}
The coordinate loss $\mathcal{L}_\text{coord}$ is a smooth-$\ell_1$ (Huber) loss on predicted versus true C$\alpha$ backbone coordinates for CDR positions:
\begin{equation}
    \mathcal{L}_\text{coord} = \sum_{k \in V_\text{CDR}} \text{smooth}_{\ell_1}\!\left(\mathbf{Z}_k - \mathbf{x}_k^\text{true}\right)
\end{equation}
The shadow paratope loss $\mathcal{L}_\text{shadow}$~\citep{kong2023dymean} enforces that the predicted CDR backbone maintains correct distances to the epitope surface:
\begin{equation}
    \mathcal{L}_\text{shadow} = \frac{1}{|\mathcal{E}_\text{epi}| \cdot L} \sum_{j \in \mathcal{E}_\text{epi}} \sum_{k \in V_\text{CDR}} \left| \|\hat{\mathbf{x}}_k - \mathbf{x}_j\| - \|\mathbf{x}_k^\text{true} - \mathbf{x}_j\| \right|
\end{equation}
where $\mathcal{E}_\text{epi}$ is the set of epitope residues within 8~\AA{} of any CDR residue. 
At inference, greedy decoding selects the highest-weight mixture component per position and the argmax amino acid.

\section{Experiments}
\label{sec:experiments}

\subsection{Setup}

\paragraph{Dataset and metrics.} We evaluate \af{} on \chimera{}~\citep{ahmed2026chimerabench}, comprising 2,922 antibody-antigen complexes. We report CDR-H3 on the antigen-fold split (2338/292/292 train/val/test), which holds out whole antigen folds and therefore measures generalization to unseen target families. The sequence quality is measured by amino acid recovery (AAR) and perplexity (PPL). Whereas the structure quality is measured by C$\alpha$ RMSD. Binding quality is measured by fnat, iRMSD, DockQ~\citep{basu2016dockq}, and epitope F1.

\paragraph{Baselines.} We compare \af{} against: equivariant GNNs (RAAD~\citep{wu2025raad}, MEAN~\citep{kong2022mean}, dyMEAN~\citep{kong2023dymean}), diffusion/flow models (DiffAb~\citep{luo2022diffab}, AbFlowNet~\citep{abir2025abflownet}, AbMEGD~\citep{chen2025AbMEGD}, RADAb~\citep{wang2024radab}, dyAb~\citep{tan2025dyab}), AbODE~\citep{verma2023abode}, and AbDockGen~\citep{jin2022hern}.

\subsection{Main Results}

\begin{table*}[h!]
\centering
\caption{CDR-H3 design on \chimera{} antigen-fold split. Best values are in \textbf{bold}, second-best \underline{underlined}. }
\label{tab:main_results}
\small
\setlength{\tabcolsep}{3pt}
\begin{tabular}{llccccccc}
\toprule
& Method & AAR$\uparrow$ & PPL$\downarrow$ & RMSD$\downarrow$ & fnat$\uparrow$ & iRMSD$\downarrow$ & DockQ$\uparrow$ & epiF1$\uparrow$ \\
\midrule
\multirow{3}{*}{\rotatebox{90}{\scriptsize GNN}}
& RAAD & \underline{0.38}\std{.13} & \underline{3.06}\std{.49} & \underline{1.65}\std{.70} & \underline{0.62}\std{.31} & \underline{1.42}\std{.62} & \underline{0.70}\std{.21} & \underline{0.78}\std{.25} \\
& MEAN & \textbf{0.40}\std{.14} & \textbf{2.79\std{.57}} & 1.66\std{.64} & \underline{0.62}\std{.32} & \underline{1.42}\std{.61} & \underline{0.70}\std{.21} & \underline{0.78}\std{.26} \\
& dyMEAN & 0.37\std{.13} & 3.35\std{.66} & 2.06\std{.83} & 0.61\std{.31} & 1.79\std{.82} & 0.66\std{.20} & 0.75\std{.25} \\
\midrule
\multirow{5}{*}{\rotatebox{90}{\scriptsize Diff./Flow}}
& DiffAb & 0.23\std{.12} & -- & 2.24\std{1.02} & 0.52\std{.34} & 2.10\std{.89} & 0.58\std{.25} & 0.65\std{.30} \\
& AbFlowNet & 0.23\std{.12} & -- & 2.70\std{5.36} & 0.56\std{.34} & 2.46\std{4.31} & 0.60\std{.26} & 0.65\std{.30} \\
& AbMEGD & 0.24\std{.13} & -- & 2.25\std{1.06} & 0.55\std{.34} & 2.13\std{.94} & 0.59\std{.25} & 0.65\std{.30} \\
& RADAb & 0.25\std{.13} & -- & 7.79\std{41.9} & 0.51\std{.33} & 4.78\std{22.8} & 0.57\std{.26} & 0.65\std{.31} \\
& dyAb & 0.28\std{.11} & -- & 2.40\std{.81} & 0.56\std{.33} & 1.81\std{.81} & 0.65\std{.21} & 0.72\std{.29} \\
\midrule
\rotatebox{90}{\scriptsize ODE}
& AbODE & 0.27\std{.13} & 10.78\std{3.13} & 13.17\std{4.81} & 0.16\std{.22} & 4.00\std{1.79} & 0.39\std{.16} & 0.40\std{.36} \\
\midrule
\rotatebox{90}{\scriptsize AR}
& AbDockGen & 0.24\std{.13} & 7.89\std{3.45} & 3.65\std{.81} & 0.46\std{.27} & 2.41\std{.84} & 0.57\std{.17} & 0.75\std{.23} \\
\midrule
& \textbf{\af{} (\textit{ours})} & \underline{0.38}\std{.14} & 3.17\std{.47} & \textbf{1.55\std{.67}} & \textbf{0.69\std{.29}} & \textbf{1.27\std{.57}} & \textbf{0.74\std{.21}} & \textbf{0.82\std{.22}} \\
\bottomrule
\end{tabular}
\end{table*}

Table~\ref{tab:main_results} presents the main comparison, where \af{} wins on almost all metrics. It attains the lowest RMSD (1.55~\AA) and the best binding quality on every interface measure: fnat 0.69 against 0.62 for the next best method, iRMSD 1.27~\AA{} against 1.42~\AA, DockQ 0.74 against 0.70, and epitope F1 0.82 against 0.78. \af{} produces consistent gains on structural and binding metrics rather than trading off, which is the pattern prior methods fail to achieve.

Moreover, \af{} performs competitively on AAR or perplexity. As noted earlier, the AAR gap measures vocabulary collapse, and not necessarily design quality. MEAN's effective vocabulary is 6.7 against a native 14.9, whereas \af{} holds 8.9, and Section~\ref{sec:diagnosis_ce} shows that recovery under argmax decoding is maximized by collapsing onto positional modes. A method that collapses harder scores higher AAR while producing less antigen-specific sequence. \af{} gives up 2 points of AAR for a 7-point fnat gain and a 4-point epitope F1 gain over MEAN.

The same ordering holds on the temporal split, where \af{} ties MEAN on AAR (0.39) and leads on perplexity, RMSD, fnat, iRMSD, DockQ, and epitope F1, while on the epitope-group split, it leads on perplexity, RMSD, and all four interface metrics. The per-CDR results and the antigen conditioning analysis in Appendix~\ref{app:conditioning}.

\paragraph{Discussion.} We note two observations from Table~\ref{tab:main_results}. First, the sequence and binding axes are close to independent. MEAN tops AAR while being fourth on fnat, and AbODE reaches AAR 0.27 with fnat 0.16. Sequence recovery therefore cannot serve as a proxy for binding quality, and a method should be judged on both. Second, the three GNN baselines span AAR 0.37 to 0.40 with nearly identical fnat (0.61 to 0.62), so their architectural differences move the sequence predictions slightly, while leaving the interface untouched. \af{} improves the interface instead by modeling interface geometry and antigen specificity directly rather than relying on message passing to propagate antigen information implicitly. The antigen-fold split holds out whole antigen folds, which makes these gains generalizable against targets structurally unlike anything in training.

\begin{table*}[h!]
\centering
\caption{Ablation study on CDR-H3 on epitope-group split. Each row either adds or removes one component of the full configuration. EV represents the effective vocabulary. The full \af{} model is from the epitope-group split in Table~\ref{tab:other_splits}. }
\label{tab:ablations}
\small
\setlength{\tabcolsep}{2.6pt}
\begin{tabular}{lcccccccc}
\toprule
Configuration & AAR$\uparrow$ & EV$\uparrow$ & PPL$\downarrow$ & RMSD$\downarrow$ & fnat$\uparrow$ & iRMSD$\downarrow$ & DockQ$\uparrow$ & epiF1$\uparrow$ \\
\midrule
\af{} (full) & 0.392 & 8.06 & 3.011 & 1.864 & 0.530 & 1.543 & 0.656 & 0.683 \\
\midrule
$-$ Hyperbolic attn. & 0.387 & 8.73 & 2.867 & 1.876 & 0.540 & 1.568 & 0.657 & 0.714 \\
$-$ Antigen cls loss & 0.402 & 8.06 & 3.029 & 1.905 & 0.537 & 1.597 & 0.653 & 0.677 \\
$-$ Shadow paratope & 0.390 & 8.29 & 3.025 & 1.880 & 0.524 & 1.579 & 0.651 & 0.698 \\
$-$ Framework dropout & 0.403 & 7.56 & 2.905 & 1.869 & 0.556 & 1.554 & 0.664 & 0.706 \\
$-$ aMCL & 0.390 & 8.39 & 3.277 & 1.899 & 0.543 & 1.589 & 0.656 & 0.700 \\
$-$ GDPP & 0.397 & 7.26 & 3.000 & 1.859 & 0.545 & 1.555 & 0.660 & 0.700 \\
\midrule
+ Plain MLP head & 0.394 & 7.89 & 3.059 & 1.853 & 0.539 & 1.548 & 0.659 & 0.701 \\
$-$ Pairwise term & 0.397 & 8.16 & 2.981 & 1.871 & 0.533 & 1.557 & 0.656 & 0.693 \\
$-$ MDN term & 0.396 & 8.18 & 2.984 & 1.884 & 0.547 & 1.563 & 0.661 & 0.697 \\
\midrule
$-$ ESM-2 & 0.381 & 7.40 & 3.180 & 1.894 & 0.543 & 1.581 & 0.657 & 0.688 \\
+ EGNN 3 layers & 0.393 & 7.59 & 3.023 & 1.956 & 0.501 & 1.641 & 0.637 & 0.665 \\
+ EGNN 7 layers & 0.393 & 7.98 & 3.034 & 1.834 & 0.562 & 1.530 & 0.668 & 0.707 \\
\bottomrule
\end{tabular}
\end{table*}

\subsection{Ablation Study}
\label{sec:ablation}

We isolate the contribution of each component by performing ablation experiments in Table~\ref{tab:ablations} on the epitope-group split by changing one component at a time in the full configuration. 
The depth of EGNN is a structural lever that helps \af{} learn better representations. 
The 3-layer variant loses 2.9 points of fnat, 0.10~\AA{} of iRMSD, and 0.09~\AA{} of C$\alpha$ RMSD, and the 7-layer variant recovers all three. The 0.12~\AA{} RMSD spread between the shallow and deep variants is six times the 0.02~\AA{} gap between our two full-model runs, which makes depth the one component whose structural effect is unambiguous. 
Everything else moves RMSD by less than 0.05~\AA{} and fnat by less than 0.03.

Two ablations \emph{raise} AAR above the full model, and effective vocabulary (EV) separates them. Removing framework dropout gives the highest AAR (0.403) alongside the second-lowest effective vocabulary (7.56), which is what the collapse mechanism of Section~\ref{sec:diagnosis_ce} predicts. 
Removing the antigen-classification loss raises AAR to 0.402 with effective vocabulary unchanged at 8.06, and improves the conditioning metrics. 
Moreover, the GDPP regularizer contributes to producing diverse sequence predictions with an EV improvement from 7.26 to 8.06.  
A plain MLP head matches the full MDN-Potts head on every column (AAR 0.394 versus 0.392, RMSD 1.853 versus 1.864~\AA), and replacing hyperbolic attention with Euclidean attention leaves fnat and DockQ flat while \emph{raising} effective vocabulary to 8.73, which is the highest in the table by bringing a slight decrement to the structural and interface metrics. 
The removal of frozen ESM-2 embeddings reduces AAR by 1.1 points. 
These experiments show that interface gains therefore come jointly from the interface modeling and the specialized decoding mechanism.

\textbf{Limitations.} 
The effective vocabulary, while larger than that of any GNN baseline, still reaches only 53\% of native diversity. The binding positions remain a fundamentally harder problem that may require explicit modeling of side-chain rotamers or physics-based energy terms.
\af{} currently generates one CDR at a time, and could be extended with joint multi-CDR generation. All evaluations are computational, and experimental validation of the designed sequences through binding assays would strengthen the conclusions.

\section{Conclusion}
\label{sec:conclusion}

We showed that equivariant GNN methods for antibody CDR design share three causally linked failure modes, which in turn produce antigen blindness and vocabulary collapse. We then presented \af{} that addresses each failure with a targeted intervention and leads every interface and structural metric on all three \chimera{} splits while holding more vocabulary than other comparable GNN baselines.

\newpage
\bibliography{references}

@article{hummer2022advances,
  title={Advances in computational structure-based antibody design},
  author={Hummer, Alissa and Abanades, Brennan and Deane, Charlotte},
  journal={Current Opinion in Structural Biology},
  volume={74},
  pages={102379},
  year={2022},
  publisher={Elsevier}
}

@article{wang2024radab,
  title={Retrieval augmented diffusion model for structure-informed antibody design and optimization},
  author={Wang, Zichen and Ji, Yaokun and Tian, Jianing and Zheng, Shuangjia},
  journal={arXiv preprint arXiv:2410.15040},
  year={2024}
}

@inproceedings{verma2023abode,
  title={Ab{ODE}: {A}b initio Antibody Design using Conjoined {ODE}s},
  author={Verma, Yogesh and Heinonen, Markus and Garg, Vikas},
  booktitle={International Conference on Machine Learning},
  pages={35037--35050},
  year={2023},
  organization={PMLR}
}

@article{potocnakova2016introduction,
  title={An introduction to B-cell epitope mapping and in silico epitope prediction},
  author={Potocnakova, Lenka and Bhide, Mangesh and Pulzova, Lucia Borszekova},
  journal={Journal of immunology research},
  volume={2016},
  year={2016},
  publisher={Hindawi}
}

@inproceedings{satorras2021n,
  title={E(n) equivariant graph neural networks},
  author={Satorras, V{\i}ctor Garcia and Hoogeboom, Emiel and Welling, Max},
  booktitle={International conference on machine learning},
  pages={9323--9332},
  year={2021},
  organization={PMLR}
}

@inproceedings{wu2025raad,
  title={Relation-aware equivariant graph networks for epitope-unknown antibody design and specificity optimization},
  author={Wu, Lirong and Lin, Haitao and Huang, Yufei and Gao, Zhangyang and Tan, Cheng and Liu, Yunfan and Wu, Tailin and Li, Stan Z},
  booktitle={Proceedings of the AAAI Conference on Artificial Intelligence},
  volume={39},
  number={1},
  pages={895--904},
  year={2025}
}

@article{sestak2024vn_egnn,
  title={{VN-EGNN}: {E(3)}-Equivariant Graph Neural Networks with Virtual Nodes Enhance Protein Binding Site Identification},
  author={Sestak, Florian and Schneckenreiter, Lisa and Brandstetter, Johannes and Hochreiter, Sepp and Mayr, Andreas and Klambauer, G{\"u}nter},
  journal={Journal of Cheminformatics},
  volume={18},
  pages={11},
  year={2026},
  doi={10.1186/s13321-025-01127-9}
}

@inproceedings{jin2022hern,
  title={Antibody-antigen docking and design via hierarchical structure refinement},
  author={Jin, Wengong and Barzilay, Regina and Jaakkola, Tommi},
  booktitle={International Conference on Machine Learning},
  pages={10217--10227},
  year={2022},
  organization={PMLR}
}

@inproceedings{chen2020simple,
  title={A simple framework for contrastive learning of visual representations},
  author={Chen, Ting and Kornblith, Simon and Norouzi, Mohammad and Hinton, Geoffrey},
  booktitle={International conference on machine learning},
  pages={1597--1607},
  year={2020},
  organization={PmLR}
}

@inproceedings{kong2023dymean,
  title={End-to-End Full-Atom Antibody Design},
  author={Kong, Xiangzhe and Huang, Wenbing and Liu, Yang},
  booktitle={International Conference on Machine Learning},
  pages={17409--17429},
  year={2023},
  organization={PMLR}
}

@article{luo2022diffab,
  title={Antigen-specific antibody design and optimization with diffusion-based generative models for protein structures},
  author={Luo, Shitong and Su, Yufeng and Peng, Xingang and Wang, Sheng and Peng, Jian and Ma, Jianzhu},
  journal={Advances in Neural Information Processing Systems},
  volume={35},
  pages={9754--9767},
  year={2022}
}

@inproceedings{tan2025dyab,
  title={dyAb: Flow Matching for Flexible Antibody Design with {A}lphaFold-driven Pre-binding Antigen},
  author={Tan, Cheng and Zhang, Yijie and Gao, Zhangyang and Huang, Yufei and Lin, Haitao and Wu, Lirong and Wu, Fandi and Blanchette, Mathieu and Li, Stan Z},
  booktitle={Proceedings of the AAAI Conference on Artificial Intelligence},
  volume={39},
  number={1},
  pages={782--790},
  year={2025}
}

@inproceedings{kong2022mean,
  title={Conditional Antibody Design as {3D} Equivariant Graph Translation},
  author={Kong, Xiangzhe and Huang, Wenbing and Liu, Yang},
  booktitle={International Conference on Learning Representations},
  year={2023}
}

@article{abir2025abflownet,
  title={Ab{F}low{N}et: Optimizing Antibody-Antigen Binding Energy via {D}iffusion-{GF}low{N}et Fusion},
  author={Abir, Abrar Rahman and Shahgir, Haz Sameen and Ratul, Md Rownok Zahan and Tahmid, Md Toki and Steeg, Greg Ver and Dong, Yue},
  journal={arXiv preprint arXiv:2505.12358},
  year={2025}
}

@article{chen2025AbMEGD,
  title={Antibody Design and Optimization with Multi-scale Equivariant Graph Diffusion Models for Accurate Complex Antigen Binding},
  author={Chen, Jiameng and Cai, Xiantao and Wu, Jia and Hu, Wenbin},
  journal={arXiv preprint arXiv:2506.20957},
  year={2025}
}

@article{basu2016dockq,
  title={Dock{Q}: a quality measure for protein-protein docking models},
  author={Basu, Sankar and Wallner, Bj{\"o}rn},
  journal={PloS one},
  volume={11},
  number={8},
  pages={e0161879},
  year={2016},
  publisher={Public Library of Science San Francisco, CA USA}
}

@article{chothia1987canonical,
  title={Canonical structures for the hypervariable regions of immunoglobulins},
  author={Chothia, Cyrus and Lesk, Arthur M},
  journal={Journal of molecular biology},
  volume={196},
  number={4},
  pages={901--917},
  year={1987},
  publisher={Elsevier}
}

@article{uccar2025blosum,
  title={{BLOSUM} Is All You Learn—Generative Antibody Models Reflect Evolutionary Priors},
  author={U{\c{c}}ar, Talip and Sormanni, Pietro},
  journal={bioRxiv},
  pages={2025--10},
  year={2025},
  publisher={Cold Spring Harbor Laboratory}
}

@inproceedings{gdpp2019,
  title={{GDPP}: Learning Diverse Generations Using Determinantal Point Processes},
  author={Elfeki, Mohamed and Couprie, Camille and Riviere, Morgane and Elhoseiny, Mohamed},
  booktitle={International Conference on Machine Learning},
  volume={97},
  pages={1819--1828},
  year={2019},
  organization={PMLR}
}

@inproceedings{lee2024amcl,
  title={Annealed Multiple Choice Learning: Overcoming limitations of Winner-takes-all with annealing},
  author={Perera, David and Letzelter, Victor and Mariotte, Th{\'e}o and Cort{\'e}s, Adrien and Chen, Mickael and Essid, Slim and Richard, Ga{\"e}l},
  booktitle={Advances in Neural Information Processing Systems},
  volume={37},
  year={2024}
}

@inproceedings{alon2021bottleneck,
  title={On the Bottleneck of Graph Neural Networks and its Practical Implications},
  author={Alon, Uri and Yahav, Eran},
  booktitle={International Conference on Learning Representations},
  year={2021}
}

@inproceedings{nickel2017poincare,
  title={Poincar{\'e} Embeddings for Learning Hierarchical Representations},
  author={Nickel, Maximillian and Kiela, Douwe},
  booktitle={Advances in Neural Information Processing Systems},
  volume={30},
  year={2017}
}

@article{marks2011protein,
  title={Protein {3D} Structure Computed from Evolutionary Sequence Variation},
  author={Marks, Debora S. and Colwell, Lucy J. and Sheridan, Robert and Hopf, Thomas A. and Pagnani, Andrea and Zecchina, Riccardo and Sander, Chris},
  journal={PLoS ONE},
  volume={6},
  number={12},
  pages={e28766},
  year={2011},
  doi={10.1371/journal.pone.0028766}
}

@techreport{bishop1994mixture,
  title={Mixture Density Networks},
  author={Bishop, Christopher M.},
  year={1994},
  institution={Aston University},
  number={NCRG/94/004},
  address={Birmingham, UK}
}

@article{li2023terminator,
  title={Neural network-derived {P}otts models for structure-based protein design using backbone atomic coordinates and tertiary motifs},
  author={Li, Alex J. and Sundar, Vikram and Grigoryan, Gevorg and Keating, Amy E.},
  journal={Protein Science},
  volume={32},
  number={2},
  pages={e4554},
  year={2023},
  doi={10.1002/pro.4554}
}

@article{birnbaum2026beyond,
  title={Beyond native sequence recovery: Improved modeling of the sequence-energy landscape of protein structures},
  author={Birnbaum, Foster and Keating, Amy E},
  journal={bioRxiv},
  pages={2026--01},
  year={2026},
  publisher={Cold Spring Harbor Laboratory}
}

@article{chinery2024simple,
  title={Simple computational methods can outperform deep learning in designing diverse, binder-enriched antibody libraries},
  author={Chinery, Lewis and Hummer, Alissa M and Mehta, Brij Bhushan and Akbar, Rahmad and Rawat, Puneet and Slabodkin, Andrei and Le Quy, Khang and Lund-Johansen, Fridtjof and Greiff, Victor and Jeliazkov, Jeliazko R and Deane, Charlotte M},
  journal={bioRxiv},
  year={2024},
  doi={10.1101/2024.03.26.586756}
}

@article{ho2022classifierfree,
  title={Classifier-Free Diffusion Guidance},
  author={Ho, Jonathan and Salimans, Tim},
  journal={arXiv preprint arXiv:2207.12598},
  year={2022}
}

@article{cortes2025wta,
  title={Winner-takes-all for multivariate probabilistic time series forecasting},
  author={Cort{\'e}s, Adrien and Rehm, R{\'e}mi and Letzelter, Victor},
  journal={arXiv preprint arXiv:2506.05515},
  year={2025}
}

@article{guzman2012mcl,
  title={Multiple choice learning: Learning to produce multiple structured outputs},
  author={Guzman-Rivera, Abner and Batra, Dhruv and Kohli, Pushmeet},
  journal={Advances in neural information processing systems},
  volume={25},
  year={2012}
}

@article{mason2025inversefoldingbenchmark,
  title={Benchmarking inverse folding models for antibody {CDR} sequence design},
  author={Li, Yifan and Lang, Yuxiang and Xu, Chenrui and Zhou, Yi and Pang, Ziwei and Greisen, Per Jr},
  journal={PLOS ONE},
  volume={20},
  number={6},
  pages={e0324566},
  year={2025},
  doi={10.1371/journal.pone.0324566}
}

@inproceedings{ahmed2026evostruct,
  title={EvoStruct: Bridging Evolutionary and Structural Priors for Antibody CDR Design via Protein Language Model Adaptation},
  author={Ahmed, Mansoor and Lee, Sujin and Khayaz, Umar and Patterson, Murray},
  booktitle={The 2026 Workshop on Generative and Agentic AI for Biology},
  year={2026}
}

@article{ahmed2026contact,
  title={ConTact: Contact-First Antibody CDR Design via Explicit Interface Reasoning},
  author={Ahmed, Mansoor and VonBank, Spencer and Taj, Nadeem and Lee, Sujin and Jan, Naila and Patterson, Murray},
  journal={arXiv preprint arXiv:2605.21600},
  year={2026}
}

@inproceedings{
ahmed2026chimerabench,
title={{C}hi{MER}a-Bench: A Benchmark Dataset for Epitope-Specific Antibody Design},
author={Mansoor Ahmed and Nadeem Taj and Imdad Ullah Khan and Hemanth Venkateswara and Murray Patterson},
booktitle={ICLR 2026 Workshop on Generative and Experimental Perspectives for Biomolecular Design},
year={2026}
}

\onecolumn
\appendix

\section{Appendix}



\subsection{Proofs}
\label{app:proofs}

\begin{snugshade}\noindent
\textbf{Proposition~\ref{prop:budget}} (Conditioning Budget)
Fix a position $i$ in the CDR loop. Let $s_i \in \mathcal{A}$ be the native residue there, over the alphabet of $|\mathcal{A}| = 20$ amino acids. Let $F_i$ collect the antibody-side context available at $i$ (position index, loop length, framework sequence) and let $A$ denote the antigen. Let $p^*$ be the joint law of $(s_i, F_i, A)$ induced by the data distribution, and consider two families of predictors,
\begin{equation*}
\mathcal{P}_\mathrm{cond} = \bigl\{ q(\cdot \mid F_i, A) \bigr\},
\qquad
\mathcal{P}_\mathrm{blind} = \bigl\{ q(\cdot \mid F_i) \bigr\} \subset \mathcal{P}_\mathrm{cond},
\end{equation*}
the second being those predictors measurable with respect to $F_i$ alone. Define the population cross-entropy risk $\mathcal{R}(q) = \mathbb{E}\bigl[-\log q(s_i \mid \cdot)\bigr]$. Then:
\begin{enumerate}
\item[(i)] $\mathcal{R}(q) \ge H(s_i \mid F_i, A)$ for every $q \in \mathcal{P}_\mathrm{cond}$, with equality if and only if $q = p^*(\cdot \mid F_i, A)$ almost surely.
\item[(ii)] $\min_{q \in \mathcal{P}_\mathrm{blind}} \mathcal{R}(q) = H(s_i \mid F_i)$, attained at the antigen-marginalized conditional $p^*(a \mid F_i) = \mathbb{E}_{A \mid F_i}\bigl[p^*(a \mid F_i, A)\bigr]$.
\item[(iii)] The two optima differ by exactly $H(s_i \mid F_i) - H(s_i \mid F_i, A) = I(s_i; A \mid F_i) \ge 0$.
\end{enumerate}
The entire cross-entropy reduction that antigen conditioning can buy is therefore capped at $I(s_i; A \mid F_i)$, and the positional marginal is the cross-entropy optimum only inside the antigen-blind class. Whenever the antigen is informative, the marginal is strictly suboptimal, by precisely that amount.
\end{snugshade}

\begin{proof}
Write $p^*_i = p^*(\cdot \mid F_i, A)$. For any $q \in \mathcal{P}_\mathrm{cond}$, condition on $(F_i, A)$ and add and subtract $\log p^*_i(s_i)$ inside the expectation:
\begin{align*}
\mathcal{R}(q)
&= \mathbb{E}_{F_i, A}\, \mathbb{E}_{s_i \mid F_i, A}\bigl[-\log q(s_i \mid F_i, A)\bigr] \\
&= \mathbb{E}_{F_i, A}\, \mathbb{E}_{s_i \mid F_i, A}\!\left[-\log p^*_i(s_i) + \log \frac{p^*_i(s_i)}{q(s_i \mid F_i, A)}\right] \\
&= H(s_i \mid F_i, A) + \mathbb{E}_{F_i, A}\bigl[D_\mathrm{KL}(p^*_i \,\|\, q)\bigr].
\end{align*}
By Gibbs' inequality, the divergence term is non-negative and vanishes if and only if $q = p^*_i$ almost surely, which is the statement that cross-entropy is a strictly proper scoring rule. This gives (i).

For (ii), restrict to $q \in \mathcal{P}_\mathrm{blind}$. The same algebra applied with the conditioning variable $F_i$ alone yields $\mathcal{R}(q) = H(s_i \mid F_i) + \mathbb{E}_{F_i}[D_\mathrm{KL}(p^*(\cdot \mid F_i) \,\|\, q)]$, so the minimizer inside the class is $p^*(\cdot \mid F_i)$ and the minimum value is $H(s_i \mid F_i)$. The tower property gives $p^*(a \mid F_i) = \mathbb{E}_{A \mid F_i}[p^*(a \mid F_i, A)]$, so the antigen-blind optimum is the antigen-marginalized conditional and nothing else.

For (iii), subtract the two optima and apply the definition of conditional mutual information, $I(s_i; A \mid F_i) = H(s_i \mid F_i) - H(s_i \mid F_i, A)$, which is non-negative because conditioning on more variables cannot increase entropy. Note that (ii) identifies the marginal as optimal only within $\mathcal{P}_\mathrm{blind}$: as an element of $\mathcal{P}_\mathrm{cond}$ it incurs the excess risk $\mathbb{E}[D_\mathrm{KL}(p^*_i \,\|\, p^*(\cdot \mid F_i))] = I(s_i; A \mid F_i)$, which is strictly positive whenever $A$ is informative about $s_i$ given $F_i$.
\end{proof}

\begin{snugshade}\noindent
\textbf{Proposition \ref{prop:identifiability}} (Argmax Identifiability)
Let $s = (s_1, \dots, s_L)$ be the native CDR loop, and define the two greedy decoders
\begin{equation*}
\hat{s}_i = \argmax_{a \in \mathcal{A}} p^*(a \mid F_i, A),
\qquad
\bar{s}_i = \argmax_{a \in \mathcal{A}} p^*(a \mid F_i),
\end{equation*}
the first from the fully conditioned law and the second from its antigen marginal, with ties broken by a fixed rule. Let
\begin{equation*}
\mathcal{M} = \bigl\{ i \in \{1, \dots, L\} : \hat{s}_i \neq \bar{s}_i \bigr\}
\end{equation*}
be the positions at which the antigen moves the modal residue, and let $\mathrm{AAR}(u) = L^{-1} \sum_{i=1}^{L} \mathbb{1}[u_i = s_i]$. Then:
\begin{enumerate}
\item[(i)] $\bigl|\mathrm{AAR}(\hat{s}) - \mathrm{AAR}(\bar{s})\bigr| \le |\mathcal{M}| / L$, and the bound is attained.
\item[(ii)] The bound involves no information-theoretic quantity. In particular $\mathcal{M} = \emptyset$, hence $\mathrm{AAR}(\hat{s}) = \mathrm{AAR}(\bar{s})$ exactly, is compatible with $I(s_i; A \mid F_i)$ bounded away from zero.
\end{enumerate}
An antigen-blind and a perfectly conditioned decoder can therefore be indistinguishable under AAR while differing substantially in how much antigen information they carry, so AAR cannot certify conditioning and a gap in AAR cannot be read as a gap in conditioning.
\end{snugshade}

\begin{proof}
By the definition of $\mathcal{M}$ the two decodes agree for every $i \notin \mathcal{M}$, so the corresponding indicator terms cancel and
\begin{equation*}
\mathrm{AAR}(\hat{s}) - \mathrm{AAR}(\bar{s})
= \frac{1}{L}\sum_{i \in \mathcal{M}} \bigl(\mathbb{1}[\hat{s}_i = s_i] - \mathbb{1}[\bar{s}_i = s_i]\bigr).
\end{equation*}
Each summand is a difference of indicators and lies in $\{-1, 0, 1\}$, so the sum is at most $|\mathcal{M}|$ in absolute value, giving (i). The bound is attained when $\hat{s}_i = s_i \neq \bar{s}_i$ at every $i \in \mathcal{M}$.

For (ii), take $|\mathcal{A}| = 20$, let $A$ be uniform on $19$ values, and set $p^*(\cdot \mid F_i, A = k)$ to place mass $0.6$ on $a_1$ and $0.4$ on $a_{k+1}$. The mode is $a_1$ for every $k$, and the marginal $p^*(\cdot \mid F_i)$ places $0.6$ on $a_1$ and $0.4/19$ on each remaining residue, whose mode is also $a_1$. Hence $\mathcal{M} = \emptyset$ and the AAR gap is exactly zero, while $H(s_i \mid F_i, A) = 0.97$ bits and $H(s_i \mid F_i) = 2.67$ bits give $I(s_i; A \mid F_i) = 1.70$ bits, which is $39\%$ of the maximum $\log_2 20 = 4.32$ bits available at the position. The mechanism is that AAR is a $0$--$1$ loss and reads only the argmax, so every rearrangement of mass among the non-modal residues, which is where the antigen acts in this construction, is invisible to it.
\end{proof}

\begin{snugshade}\noindent
\textbf{Corrolary \ref{cor:vocab_collapse}} (Vocabulary Collapse under Argmax Decoding)
Let $c_i$ denote the full conditioning the model sees at position $i$ and let decoding be greedy, $\hat{s}_i = \argmax_{a} p_\theta(a \mid c_i)$. Evaluate over the index set $\mathcal{I}$ of all designed positions across all test complexes, and define the realized mode set, the empirical output distribution, and the effective vocabulary,
\begin{equation*}
\mathcal{S} = \{\hat{s}_i : i \in \mathcal{I}\},
\quad m = |\mathcal{S}|,
\quad \hat{q}(a) = \frac{1}{|\mathcal{I}|}\sum_{i \in \mathcal{I}} \mathbb{1}[\hat{s}_i = a],
\quad \mathrm{EV} = \exp\bigl(H(\hat{q})\bigr).
\end{equation*}
Then:
\begin{enumerate}
\item[(i)] $\mathrm{EV} \le m$, with equality if and only if $\hat{q}$ is uniform on $\mathcal{S}$.
\item[(ii)] The bound does not involve the per-position entropies $H\bigl(p_\theta(\cdot \mid c_i)\bigr)$. A model whose predictive distributions are individually close to uniform can still realize $m = 1$ and $\mathrm{EV} = 1$.
\end{enumerate}
Effective vocabulary therefore measures the diversity of the realized decoding rule, not the diversity of the learned distribution. A collapsed EV is evidence about the decoder, and it is not evidence that the model failed to learn a conditional distribution.
\end{snugshade}

\begin{proof}
Under greedy decoding $\hat{s}_i$ is a deterministic function of $c_i$, so the multiset of emitted residues is supported on $\mathcal{S}$ and $\hat{q}(a) = 0$ for $a \notin \mathcal{S}$. The entropy of a distribution supported on $m$ symbols satisfies $H(\hat{q}) \le \log m$, with equality if and only if $\hat{q}$ is uniform on the support, which is the maximum-entropy property of the uniform distribution on a finite set. Exponentiating both sides gives $\mathrm{EV} = \exp(H(\hat q)) \le m$, and the equality condition transfers because $\exp$ is strictly increasing. This is (i).

For (ii), fix any $c_i$-independent ordering of $\mathcal{A}$ and let $p_\theta(\cdot \mid c_i)$ place mass $1/20 + \epsilon$ on $a_1$ and distribute the rest evenly, for arbitrarily small $\epsilon > 0$. Every per-position distribution then has entropy within $O(\epsilon)$ of $\log 20$, yet the argmax is $a_1$ at every position, so $m = 1$ and $\mathrm{EV} = 1$. The bound in (i) depends on the realized modes alone, so no assumption about $p_\theta$ can improve it. In practice, for CDR-H3 the position-and-length null realizes $m = 8$ modes, and the baselines that top the AAR leaderboard fall below even that.
\end{proof}

\begin{snugshade}\noindent
\textbf{Proposition \ref{prop:equivariance}} (SE(3)-Equivariance of \af{})
Let the input be a complex with backbone coordinates $\mathbf{X} \in \mathbb{R}^{N \times 4 \times 3}$ (N, C$\alpha$, C, O per residue) and sequence $\mathbf{S}$. Let $g = (R, \mathbf{t})$ with $R \in \mathrm{SO}(3)$ and $\mathbf{t} \in \mathbb{R}^3$ act on coordinates atomwise by $g \cdot \mathbf{X} = R\mathbf{X} + \mathbf{t}$ and trivially on $\mathbf{S}$. Write the model as the pair of maps
\begin{equation*}
\bigl(\hat{\mathbf{Z}}, p\bigr) = \mathcal{F}(\mathbf{X}, \mathbf{S}),
\qquad
\hat{\mathbf{Z}} \in \mathbb{R}^{N \times 4 \times 3},
\quad
p \in \Delta(\mathcal{A})^{L},
\end{equation*}
with $\hat{\mathbf{Z}}$ the predicted coordinates and $p$ the per-position residue distributions. Then for every $g \in \mathrm{SE}(3)$ and every input:
\begin{enumerate}
\item[(i)] \emph{Structure is equivariant}: $\hat{\mathbf{Z}}(g \cdot \mathbf{X}, \mathbf{S}) = g \cdot \hat{\mathbf{Z}}(\mathbf{X}, \mathbf{S})$.
\item[(ii)] \emph{Sequence is invariant}: $p(g \cdot \mathbf{X}, \mathbf{S}) = p(\mathbf{X}, \mathbf{S})$.
\item[(iii)] \emph{The model is chiral}: (i) and (ii) fail for $g = (R, \mathbf{t})$ with $\det R = -1$, so the symmetry group of \af{} is $\mathrm{SE}(3)$ and not $\mathrm{E}(3)$.
\end{enumerate}
Claim (iii) is a design property and not a defect. L-amino acid backbones are chiral, so an $\mathrm{E}(3)$-invariant model would assign equal likelihood to a mirror-image complex that no protein can realize.
\end{snugshade}

\begin{proof}
We decompose the pipeline into three stages and verify each preserves the required symmetry.

\textbf{Stage 1: VN-EGNN.} Let $g = (R, \mathbf{t})$ with $R \in \mathrm{SO}(3)$ act on all input coordinates as $\mathbf{x}_i \mapsto R\mathbf{x}_i + \mathbf{t}$. Coordinate differences transform as $\Delta\mathbf{x}_{ij} \mapsto R \Delta\mathbf{x}_{ij}$ (translation cancels). The Gram matrix radial feature has entries $[\Delta\mathbf{x}_i \Delta\mathbf{x}_j^\top]_{ab} = \langle \Delta\mathbf{x}_i^{(a)}, \Delta\mathbf{x}_j^{(b)} \rangle$. Under rotation, $\langle R\mathbf{u}, R\mathbf{v} \rangle = \mathbf{u}^\top R^\top R \mathbf{v} = \langle \mathbf{u}, \mathbf{v} \rangle$, so the Gram matrix is invariant. Distances and RBFs of distances are likewise invariant. The quaternion orientation features are computed from residue local frames $[\mathbf{e}_1, \mathbf{e}_2, \mathbf{e}_1 \times \mathbf{e}_2]$; each frame transforms as $F \mapsto RF$, so the relative rotation $F_i^\top F_j$ and the quaternion read off from it are invariant for $R \in \mathrm{SO}(3)$. The message $\mathbf{m}_{ij} = \text{MLP}([\mathbf{h}_i, \mathbf{h}_j, \text{invariant features}])$ is therefore invariant, and the node update $\mathbf{h}_i'$ is invariant. The coordinate update $\mathbf{x}_i' = \mathbf{x}_i + \sum_j \Delta\mathbf{x}_{ij} \odot f(\mathbf{m}_{ij})$ is equivariant: under $g$, $R\mathbf{x}_i + \mathbf{t} + \sum_j R\Delta\mathbf{x}_{ij} \odot f(\mathbf{m}_{ij}) = R\mathbf{x}_i' + \mathbf{t}$.

This establishes (iii) as well. For $R \in \mathrm{O}(3)$ with $\det(R) = -1$ the Gram matrix and the distances remain invariant, but the third frame axis obeys $(R\mathbf{e}_1) \times (R\mathbf{e}_2) = \det(R)\, R(\mathbf{e}_1 \times \mathbf{e}_2) = -R(\mathbf{e}_1 \times \mathbf{e}_2)$, so the reflected frame is $R \, \mathrm{diag}(1,1,-1) F$ rather than $RF$. The relative rotation $F_i^\top F_j$ therefore changes, the vector part of the quaternion read off from it flips sign, and the messages $\mathbf{m}_{ij}$ are not preserved. Both the coordinate stream and the sequence head consume those messages, so neither (i) nor (ii) survives a reflection and the symmetry group is exactly $\mathrm{SE}(3)$.

\textbf{Stage 2: Hyperbolic cross-attention.} The module receives GNN node features $\mathbf{h}$ (invariant from Stage 1). All operations (linear projections, hyperboloid mapping, Minkowski inner product, softmax, value aggregation) act on these invariant features without accessing coordinates. The output is therefore invariant.

\textbf{Stage 3: MDN-Potts head.} The sequence head receives the concatenation of invariant features from Stages 1--2 that are functions of the sequence alone and hence trivially invariant. All operations (layer norm, linear projections, softmax, belief propagation) preserve invariance. The framework dropout masks feature channels and does not touch coordinates, so it commutes with the group action. The predicted sequence probabilities $p(\mathbf{s} \mid \mathbf{X}, \mathbf{S})$ are therefore SE(3)-invariant, establishing (ii). The coordinate predictions $\hat{\mathbf{Z}}$ come from the GNN's equivariant coordinate stream, establishing (i).
\end{proof}



\subsection{Graph Construction Details}
\label{app:graph}

\begin{table}[h!]
\centering
\caption{Edge types in the heterogeneous antibody-antigen graph.}
\label{tab:edge_types}
\small
\setlength{\tabcolsep}{4pt}
\begin{tabular}{clll}
\toprule
Type & Edge class & Scope & Criterion \\
\midrule
0 & Intra-chain radial & Same chain & C$\alpha$ dist $< 8$~\AA \\
1 & Global$\leftrightarrow$normal & Global tokens$\leftrightarrow$chain & Segment membership \\
2 & Global$\leftrightarrow$global & Global tokens & All pairs \\
3 & Sequential $d{=}1$ & Same chain (Ab only) & Adjacent residues \\
4 & Intra-chain KNN & Same chain & $K{=}8$ nearest C$\alpha$ \\
5 & Sequential $d{=}2$ & Same chain (Ab only) & Distance-2 residues \\
6 & Inter-chain radial & Different chains & C$\alpha$ dist $< 12$~\AA \\
7 & Inter-chain KNN & Different chains & $K{=}8$ nearest C$\alpha$ \\
8 & VN$\leftrightarrow$epitope & Virtual nodes$\leftrightarrow$epitope & Bidirectional, all pairs \\
9 & VN$\leftrightarrow$CDR & Virtual nodes$\leftrightarrow$CDR & Bidirectional, all pairs \\
\bottomrule
\end{tabular}
\end{table}

Each residue is represented by a 108-dimensional feature vector: sequential position embeddings (16D via sinusoidal encoding), distance radial basis functions for three backbone bond lengths N--C$\alpha$, C$\alpha$--C, and C--O (each expanded into 16 Gaussian basis functions, totaling 48D), backbone dihedral and bond angles as sine-cosine pairs (12D), local coordinate frame directional features constructed from backbone unit vectors rotated into a local reference frame (9D), amino acid identity (20D, masked to zero for CDR positions during training), and interface complementarity features (4D). These 105 features are concatenated with a 3D segment type embedding that distinguishes heavy chain, light chain, and antigen residues. A dual-path encoder processes the dense features (101D) and sparse complementarity features (4D) through separate two-layer MLPs with SiLU activations, fuses them via concatenation, and projects the result to the embedding dimension $d = 128$. The epitope residues receive an additional learnable embedding (zero-initialized) added to the projected features.

Each edge carries a 104-dimensional feature vector composed of a type-specific one-hot encoding (8D for the 8 standard types), relative position embeddings (16D), pairwise distance radial basis functions for four atom pairs N--C$\alpha$, C$\alpha$--C$\alpha$, C--C$\alpha$, and O--C$\alpha$ (each expanded into 16 Gaussian basis functions, totaling 64D), quaternion-encoded relative orientation (4D), and local frame directional features for the four atom pairs (12D). The positional features are set to zero for inter-chain and global edges (types 1, 2, 6, 7) since relative sequence position is undefined across chains. The virtual node edges (types 8--9) use learnable feature vectors initialized from a small random distribution. Moreover, all the interface metrics use symmetric C$\alpha$--C$\alpha$ contacts at 8~\AA{} restricted to CDR residues.

\subsection{Algorithms}
\label{app:algorithms}

The training algorithm for \af{} is presented in Algorithm~\ref{alg:training}, while the inference algorithm is in Algorithm~\ref{alg:inference}.

\begin{algorithm}[h!]
\caption{Training}
\label{alg:training}
\begin{algorithmic}[1]
\REQUIRE Dataset $\mathcal{D}$, model $f_\theta$, epochs $T$, aMCL schedule $\tau(t)$
\FOR{$t = 1$ to $T$}
    \STATE $\tau \gets \tau_\text{start} \cdot (\tau_\text{end}/\tau_\text{start})^{t/T_\text{anneal}}$
    \FOR{each batch $\mathcal{B} \subset \mathcal{D}$}
        \STATE Apply framework dropout: mask heavy-chain embeddings with probability $p$
        \STATE Build heterogeneous graph $\mathcal{G}$ with 10 edge types
        \STATE $\{\mathbf{h}_i, \hat{\mathbf{x}}_i\} \gets \text{VN-EGNN}(\mathcal{G})$ \hfill \COMMENT{5 layers}
        \STATE $\mathbf{z}_i \gets \text{HypAttn}(\mathbf{h}_\text{CDR}, \mathbf{h}_\text{epi})$ \hfill \COMMENT{Lorentz hyperboloid}
        \STATE $\mathbf{f}_i \gets [\mathbf{h}_i; \mathbf{z}_i; \text{ESM}_\text{proj}(\mathbf{e}_i)]$ \hfill \COMMENT{768-dim}
        \STATE $\{\boldsymbol{\ell}_k, \boldsymbol{\pi}\} \gets \text{MDN-Potts}(\mathbf{f})$ \hfill \COMMENT{$K$=4, 2-round BP}
        \STATE Compute $w_k = \text{softmax}(-\ell_k/\tau)$ \hfill \COMMENT{aMCL weights}
        \STATE $\mathcal{L} \gets \mathcal{L}_\text{seq} + \alpha\mathcal{L}_\text{coord} + \delta\mathcal{L}_\text{shadow} + \epsilon\mathcal{L}_\text{GDPP} + \lambda_\text{cls}\mathcal{L}_\text{cls}$
        \STATE Update $\theta$ via AdamW with gradient clipping (max norm 0.5)
    \ENDFOR
    \STATE Decay learning rate: $\eta \gets \eta \cdot \gamma$
    \STATE Evaluate on validation set; early stop if no improvement for 10 epochs
\ENDFOR
\end{algorithmic}
\end{algorithm}

\begin{algorithm}[h!]
\caption{Inference}
\label{alg:inference}
\begin{algorithmic}[1]
\REQUIRE Trained model $f_\theta$, test complex with antigen + antibody framework
\STATE Build graph $\mathcal{G}$ (CDR AA masked, coordinates from framework)
\STATE $\{\mathbf{h}_i, \hat{\mathbf{x}}_i\} \gets \text{VN-EGNN}(\mathcal{G})$
\STATE $\mathbf{z}_i \gets \text{HypAttn}(\mathbf{h}_\text{CDR}, \mathbf{h}_\text{epi})$
\STATE $\mathbf{f}_i \gets [\mathbf{h}_i; \mathbf{z}_i; \text{ESM}_\text{proj}(\mathbf{e}_i)]$
\STATE $\{\boldsymbol{\ell}_k, \boldsymbol{\pi}\} \gets \text{MDN-Potts}(\mathbf{f})$
\FOR{each CDR position $i$}
    \STATE $k^* \gets \arg\max_k \pi_k(i)$ \hfill \COMMENT{Best component}
    \STATE $\hat{s}_i \gets \arg\max_a \ell_{k^*}^{(i)}(a)$ \hfill \COMMENT{Greedy decode}
\ENDFOR
\RETURN Predicted sequence $\hat{\mathbf{s}}$, predicted coordinates $\hat{\mathbf{X}}_\text{CDR}$
\end{algorithmic}
\end{algorithm}

\subsection{Implementation Details}
\label{app:implementation}

\af{} is a 5-layer VN-EGNN with embedding dimension 128, hidden dimension 256, and 3 virtual nodes, connecting nodes through 10 edge types (8 standard plus 2 for virtual-node connectivity). The cross-attention on the Lorentz hyperboloid has 4 heads at curvature $c = 1.0$, with the epitope GNN hidden state providing the keys and values. The MDN-Potts head has $K = 4$ mixture components (swept) refined by 2 rounds of belief propagation over adjacent-position coupling matrices, a shared hidden dimension of 384, and a pairwise-coupling weight $\lambda_\text{pair} = 0.3$. Annealed multiple choice learning (aMCL) anneals the temperature exponentially from $\tau_\text{start} = 2.0$ to $\tau_\text{end} = 0.1$ over 20 epochs, with the endpoints and the anneal window all swept, and the exponential schedule is fixed. For training, we used AdamW with learning rate $2.2 \times 10^{-4}$, per-epoch decay $\gamma = 0.955$, batch size 8, a maximum of 50 epochs, and early stopping with patience 10 on the validation loss. The loss weights are $\alpha = 1.301$ on the coordinate term, $\delta = 0.664$ on the shadow-paratope term, $\epsilon = 0.05$ on the GDPP diversity term, and $\lambda_\text{cls} = 0.2$ on the antigen classification term, while the framework dropout is applied at rate $p = 0.3$. The hyperparameters were tuned by sweeping experiments in Weights and Biases (W\&B). It takes approximately two hours for a full training run on a single H100.

\subsection{Additional Results}

\subsubsection{Epitope-group and temporal splits}
\label{app:splits}

Table~\ref{tab:other_splits} repeats the CDR-H3 comparison on the two remaining \chimera{} splits. The epitope-group split holds out clusters of related epitopes and the temporal split holds out complexes deposited after a cutoff date. \af{} leads fnat, iRMSD, DockQ, and epitope F1 on both splits, while adding to the perplexity and RMSD. MEAN again leads AAR on epitope-group (0.42 versus 0.39) and ties \af{} on temporal (0.39), which reproduces the recovery-versus-collapse trade-off of Table~\ref{tab:main_results}. \af{} therefore keeps the lead on the interface and structural metrics on all \chimera{} splits, while maintaining robust AAR and effective vocabulary. 

\begin{table}[h!]
\centering
\caption{CDR-H3 design performance comparison of \af{} with the baselines on the epitope-group and temporal splits of \chimera{}. Best values are in \textbf{bold}, second-best \underline{underlined}. }
\label{tab:other_splits}
\small
\setlength{\tabcolsep}{2.5pt}
\begin{tabular}{llccccccc}
\toprule
& Method & AAR$\uparrow$ & PPL$\downarrow$ & RMSD$\downarrow$ & fnat$\uparrow$ & iRMSD$\downarrow$ & DockQ$\uparrow$ & epiF1$\uparrow$ \\
\midrule
\multirow{11}{*}{\rotatebox{90}{\small Epitope-group}}
& RAAD & 0.38 & 3.31 & \underline{1.95} & 0.49 & \underline{1.64} & \underline{0.64} & \underline{0.67} \\
& MEAN & \textbf{0.42} & \underline{3.00} & 2.01 & 0.48 & 1.66 & 0.63 & \underline{0.67} \\
& dyMEAN & 0.35 & -- & 2.60 & 0.41 & 2.23 & 0.56 & 0.55 \\
& DiffAb & 0.22 & -- & 2.64 & 0.48 & 2.27 & 0.58 & 0.56 \\
& AbFlowNet & 0.22 & -- & 2.70 & 0.49 & 2.37 & 0.57 & 0.55 \\
& AbMEGD & 0.22 & -- & 2.76 & \underline{0.50} & 2.41 & 0.57 & 0.56 \\
& RADAb & 0.22 & -- & 12.28 & 0.47 & 5.82 & 0.57 & 0.57 \\
& dyAb & 0.27 & -- & 3.31 & 0.35 & 2.27 & 0.54 & 0.50 \\
& AbODE & 0.31 & 9.99 & 16.40 & 0.08 & 5.12 & 0.34 & 0.20 \\
& AbDockGen & 0.23 & 7.89 & 3.93 & 0.34 & 2.64 & 0.52 & 0.62 \\
& \textbf{\af{} (\textit{ours})} & \underline{0.39} & \textbf{2.99} & \textbf{1.88} & \textbf{0.53} & \textbf{1.56} & \textbf{0.66} & \textbf{0.70} \\
\midrule
\multirow{11}{*}{\rotatebox{90}{\small Temporal}}
& RAAD & \underline{0.38} & \underline{3.24} & \underline{1.81} & \underline{0.61} & \underline{1.46} & \underline{0.71} & \underline{0.74} \\
& MEAN & \textbf{0.39} & 3.31 & 1.84 & 0.60 & 1.44 & \underline{0.71} & 0.73 \\
& dyMEAN & 0.35 & 3.28 & 2.32 & 0.50 & 2.00 & 0.63 & 0.64 \\
& DiffAb & 0.23 & -- & 2.46 & 0.51 & 2.18 & 0.61 & 0.61 \\
& AbFlowNet & 0.23 & -- & 2.67 & 0.52 & 2.33 & 0.61 & 0.61 \\
& AbMEGD & 0.23 & -- & 2.72 & 0.55 & 2.35 & 0.62 & 0.58 \\
& RADAb & 0.22 & -- & 11.57 & 0.48 & 7.91 & 0.57 & 0.57 \\
& dyAb & 0.29 & -- & 2.89 & 0.51 & 1.92 & 0.64 & 0.65 \\
& AbODE & 0.37 & 9.26 & 15.63 & 0.07 & 4.65 & 0.35 & 0.19 \\
& AbDockGen & 0.23 & 7.55 & 3.89 & 0.37 & 2.52 & 0.55 & 0.68 \\
& \textbf{\af{} (\textit{ours})} & \textbf{0.39} & \textbf{3.07} & \textbf{1.71} & \textbf{0.64} & \textbf{1.36} & \textbf{0.73} & \textbf{0.76} \\
\bottomrule
\end{tabular}
\end{table}

\subsubsection{Per-CDR results}
\label{app:per_cdr}

\begin{table}[h!]
\centering
\caption{Per-CDR comparison on \chimera{} (antigen-fold split, means only). AAR and C$\alpha$ RMSD (\AA) for each heavy-chain CDR. Best in \textbf{bold}, second-best \underline{underlined}.}
\label{tab:per_cdr}
\small
\setlength{\tabcolsep}{4pt}
\begin{tabular}{lcccccc}
\toprule
& \multicolumn{2}{c}{CDR-H1} & \multicolumn{2}{c}{CDR-H2} & \multicolumn{2}{c}{CDR-H3} \\
\cmidrule(lr){2-3} \cmidrule(lr){4-5} \cmidrule(lr){6-7}
Method & AAR$\uparrow$ & RMSD$\downarrow$ & AAR$\uparrow$ & RMSD$\downarrow$ & AAR$\uparrow$ & RMSD$\downarrow$ \\
\midrule
\textbf{\af{}} & \textbf{0.70} & \textbf{0.50} & \underline{0.59} & \textbf{0.45} & \underline{0.39} & \textbf{1.55} \\
RAAD & \underline{0.69} & \underline{0.54} & \textbf{0.60} & \underline{0.49} & 0.38 & \underline{1.65} \\
MEAN & 0.68 & 0.77 & 0.54 & 1.01 & \textbf{0.40} & 1.66 \\
dyMEAN & 0.67 & 0.99 & 0.53 & 1.37 & 0.37 & 2.06 \\
DiffAb & 0.59 & 0.84 & 0.26 & 0.90 & 0.23 & 2.24 \\
AbFlowNet & 0.57 & 0.80 & 0.28 & 0.82 & 0.23 & 2.70 \\
AbMEGD & 0.61 & 0.82 & 0.28 & 0.71 & 0.24 & 2.25 \\
RADAb & 0.65 & 3.18 & 0.36 & 0.86 & 0.25 & 7.79 \\
dyAb & 0.52 & 1.70 & 0.41 & 1.72 & 0.28 & 2.40 \\
AbDockGen & -- & -- & -- & -- & 0.24 & 3.65 \\
AbODE & 0.45 & 4.82 & 0.47 & 7.27 & 0.27 & 13.17 \\
\bottomrule
\end{tabular}
\end{table}

\af{} matches or leads the baselines on the shorter, more conserved CDR-H1 and CDR-H2 loops. On CDR-H1, it leads AAR at 0.70 against RAAD's 0.69 with the best RMSD (0.50~\AA{} versus 0.54~\AA). On CDR-H2, it reaches AAR 0.59 against RAAD's 0.60 with the best RMSD at 0.45~\AA. These loops are structurally more constrained than H3, so all competent methods cluster. The gap between methods widens on H3, where the design problem remains an open research area.

\subsubsection{Antigen conditioning analysis}
\label{app:conditioning}




We test the observed failure modes by fitting a null predictor $p(a \mid \text{position bin}, \text{loop length})$ on each training split, which sees neither the antigen nor the structure. On the antigen-fold split, the null reaches perplexity 9.81 and AAR 0.349 while emitting only 8 distinct amino acids in total; the epitope-group and temporal splits give 9.83/0.339 and 9.75/0.337. Every trained method drives perplexity far below the null, which is a direct evidence that conditioning is learned. 
None of our tested methods improves AAR over the null by more than 0.05, where methods scoring higher AAR emit \emph{fewer} distinct residues than the null itself (effective vocabulary 3.7 to 6.9 against a native 14.9). 

The native H3 residue distribution has an effective vocabulary of 14.9. Every method that leads the AAR table collapses it: MEAN to 6.7, \af{} to 8.9, RAAD to 6.0, dyMEAN to 4.6, dyAb to 6.9, AbODE to 3.7. The methods that preserve native diversity (AbDockGen 15.3, DiffAb and AbFlowNet 12.1, AbMEGD 11.8, RADAb 11.6) have significantly lower AAR. The six lowest-EV methods are the six highest-AAR methods. Recovery and diversity are inversely related on this task, as Corollary~\ref{cor:vocab_collapse} predicts, because argmax decoding emits only the modes a model realizes. \af{} preserves 33\% more vocabulary than MEAN and gives up 2 points of AAR for it, which places it on the better side of this trade-off. The bigram statistics confirm the loss of higher-order structure, where native H3 has bigram entropy 5.14 over 353 unique bigrams, \af{} reaches 3.71 over 176, and MEAN 3.02 over 93. Figure~\ref{fig:vocab-collapse} visualizes the collapse across methods on the epitope group split.


\begin{figure}[h!]
    \centering
    \includegraphics[width=1\linewidth]{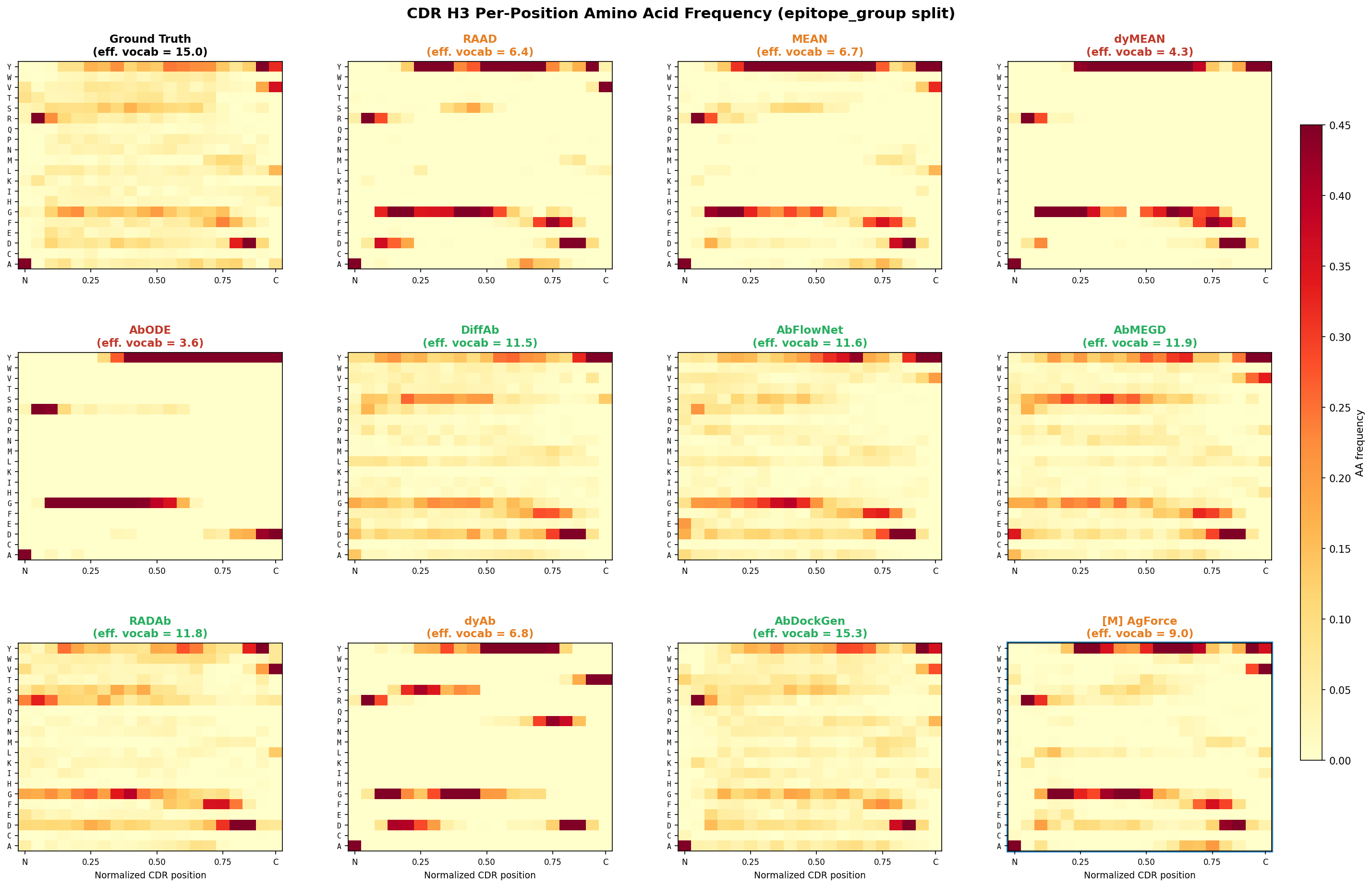}
    \caption{Illustration of the vocabulary collapse failure mode of the baselines vs. \af{}.}
    \label{fig:vocab-collapse}
\end{figure}

\end{document}